%% file: aistats_main.tex
\documentclass[twoside]{article}
\usepackage[ruled,vlined]{algorithm2e}

\usepackage{amsthm}
\usepackage{bm}
\usepackage{graphicx}   
\usepackage{array}      
\usepackage{subcaption}
\usepackage{caption}
\usepackage{mathtools}
\usepackage{overpic}
\usepackage{amsmath}
\usepackage{enumitem}
\usepackage{amsfonts}
\usepackage{xcolor}
\usepackage{booktabs}   
\usepackage{multirow}   
\usepackage{makecell}   

\usepackage{tikz}
\usepackage{bm}         
\newtheorem{lemma}{Lemma}
\newtheorem{assumption}{Assumption}
\newtheorem{theorem}{Theorem}

\usepackage[most]{tcolorbox}

\newtcolorbox{mybox}[2][]{%
  enhanced,
  colback=sand,
  colframe=sandborder,
  boxrule=0.4pt,
  arc=2.5mm,
  left=10pt,right=10pt,
  top=4pt,bottom=4pt,
  fonttitle=\bfseries\small,
  coltitle=black,
  boxed title style={empty},
  title={#2},
  #1
}

\definecolor{sand}{RGB}{248,245,235}  
\definecolor{sandborder}{RGB}{220,215,200}

\input{math_commands}

\usepackage[preprint]{aistats2026}

\usepackage[round]{natbib}
\renewcommand{\bibname}{References}
\renewcommand{\bibsection}{\subsubsection*{\bibname}}

\usepackage{hyperref}
\hypersetup{
    colorlinks,
    linkcolor={blue!90!black},
    citecolor={magenta},
    urlcolor={blue!80!black}
}
\usepackage[capitalize,nameinlink,noabbrev]{cleveref} 

\usepackage{amsthm}

\newtheorem{importedtheorem}{Imported Theorem}
\newtheorem{proposition}{Proposition}

\Crefname{importedtheorem}{Imported Theorem}{Imported Theorems}
\Crefname{assumption}{Assumption}{Assumptions}

\makeatletter
\newtheorem*{rep@theorem}{\rep@title}
\newcommand{\newreptheorem}[2]{%
\newenvironment{rep#1}[1]{%
 \def\rep@title{\Cref{##1}, Restated}%
 \begin{rep@theorem}}%
 {\end{rep@theorem}}}
\makeatother

\newreptheorem{theorem}{Theorem}
\newreptheorem{lemma}{Lemma}

\theoremstyle{remark}              
\newtheorem{remark}{Remark}        

\begin{document}

\runningauthor{Aditi Gupta, Raphael A. Meyer, Yotam Yaniv, Elynn Chen, N. Benjamin Erichson}

\twocolumn[

\aistatstitle{Quantifying Epistemic Uncertainty in Diffusion Models}

\aistatsauthor{
Aditi Gupta \And
Raphael A. Meyer \And
Yotam Yaniv \And
Elynn Chen 
}

\aistatsaddress{
Berkeley Lab \& ICSI \And 
UC Berkeley \& ICSI \And
Berkeley Lab \And
New York University 
}

\aistatsauthor{
N.~Benjamin Erichson
}

\aistatsaddress{
Berkeley Lab \& ICSI 
}
]

\begin{abstract}
To ensure high quality outputs, it is important to quantify the epistemic uncertainty of diffusion models.
Existing methods are often unreliable because they mix epistemic and aleatoric uncertainty.
We introduce a method based on Fisher information that explicitly isolates epistemic variance, producing more reliable plausibility scores for generated data. To make this approach scalable, we propose FLARE (Fisher–Laplace Randomized Estimator), which approximates the Fisher information using a uniformly random subset of model parameters. Empirically, FLARE improves uncertainty estimation in synthetic time-series generation tasks, achieving more accurate and reliable filtering than other methods. Theoretically, we bound the convergence rate of our randomized approximation and provide analytic and empirical evidence that last-layer Laplace approximations are insufficient for this task.
\end{abstract}

\input{main}

\section*{Acknowledgments}

NBE would like to acknowledge support from the U.S. Department of Energy, Office of Science, Office of Advanced Scientific Computing Research, EXPRESS: 2025 Exploratory Research for Extreme-Scale Science program, and the Scientific Discovery through
Advanced Computing (SciDAC) program, under Contract
Number DE-AC02-05CH11231 at Berkeley Lab. 
We would also like to acknowledge supported by the Defense Advanced Research Projects Agency (DARPA) under Contract No. HR0011-25-2-0011. The views, opinions, and/or findings expressed are those of the authors and should not be interpreted as representing the official views or policies of the Department of Defense or the U.S. Government.

\bibliography{references}

\input{appendix}

\end{document}

%% file: math_commands.tex
\usepackage{amsmath,amsfonts,bm,bbm}

\def\eqref#1{equation~\ref{#1}}

\def\1{\bm{1}}

\def\eps{{\epsilon}}

\def\vtheta{{\bm{\theta}}}
\def\veta{{\bm{\eta}}}

\def\vc{{\bm{c}}}

\def\vg{{\bm{g}}}

\def\vr{{\bm{r}}}

\def\vu{{\bm{u}}}

\def\vx{{\bm{x}}}

\def\vz{{\bm{z}}}

\def\mA{{\bm{A}}}
\def\mB{{\bm{B}}}
\def\mC{{\bm{C}}}

\def\mH{{\bm{H}}}
\def\mI{{\mathbf{I}}} 
\def\mJ{{\bm{J}}}

\def\mP{{\bm{P}}}

\def\mU{{\bm{U}}}
\def\mV{{\bm{V}}}

\def\mX{{\bm{X}}}

\def\mSigma{{\bm{\Sigma}}}

\DeclareMathAlphabet{\mathsfit}{\encodingdefault}{\sfdefault}{m}{sl}
\SetMathAlphabet{\mathsfit}{bold}{\encodingdefault}{\sfdefault}{bx}{n}

\newcommand{\E}{\mathbb{E}}

\newcommand{\R}{\mathbb{R}}

\newcommand{\Cov}{\mathrm{Cov}}

\DeclareMathOperator*{\argmin}{arg\,min}

\DeclareMathOperator{\tr}{tr}

%% file: main.tex
\section{Introduction}

Diffusion models have become a dominant paradigm for generative modeling, with applications ranging from image synthesis to time-series forecasting~\citep{esser2024scaling,liu2022flow,lipman2022flow,lu2022dpm}.
The stochasticity of the reverse diffusion process accounts for aleatoric variability, suggesting that diffusion models could also serve as a foundation for uncertainty quantification (UQ)~\citep{ho2020ddpm,song2020score,kendall2017uncertainties,hullermeier2021aleatoric}.
However, while aleatoric randomness is intrinsic to sampling, capturing and interpreting \emph{epistemic} uncertainty (arising from uncertainty in model parameters) remains a  challenge~\citep{depeweg2018decomposition,hullermeier2021aleatoric}.
Exact Bayesian inference over the high-dimensional parameter spaces of diffusion networks is computationally infeasible, and common proxies such as sample variance conflate epistemic and aleatoric effects~\citep{shu2024zero,de2025diffusion}.

Several approaches have been proposed to address uncertainty quantification in diffusion models.
One line of work uses last-layer Laplace approximations (LLLA)~\citep{daxberger2021laplace,kou2023bayesdiff,jazbec2025generative}, which are computationally efficient but restrict uncertainty to a small subset of model parameters.
Other methods perturb model parameters directly~\citep{berry2024casting,chan2024estimating,shu2024zero}, capturing richer epistemic structure at the cost of multiple forward passes during inference.
Ensemble and hypernetwork-based approaches approximate the weight posterior more explicitly~\citep{krueger2017bayesian,lakshminarayanan2017deep,maddox2019simple,gal2016dropout}, and can in principle separate epistemic and aleatoric uncertainty.
In practice, however, these methods are sensitive to architectural choices, hyper-priors, and the mechanism by which randomness is injected, often trading predictive fidelity for diversity if not carefully regularized.

A key source of confusion in this literature is the distinction between \emph{posterior predictive} uncertainty and \emph{epistemic} uncertainty.
For example, BayesDiff~\citep{kou2023bayesdiff,jazbec2025generative} estimates posterior predictive uncertainty in diffusion models using Tweedie-style recursions that propagate data-space variance through the reverse process. This approach aggregates multiple sources of variability, including diffusion noise and model output uncertainty, and is well suited for measuring overall sample variability. However, predictive variance alone does not indicate whether a model is uncertain due to lack of knowledge or merely due to stochastic sampling.

\begin{figure}[!b]
    \centering
    \hspace{-0.9cm}
        \begin{overpic}[width=1.0\linewidth,clip]
            {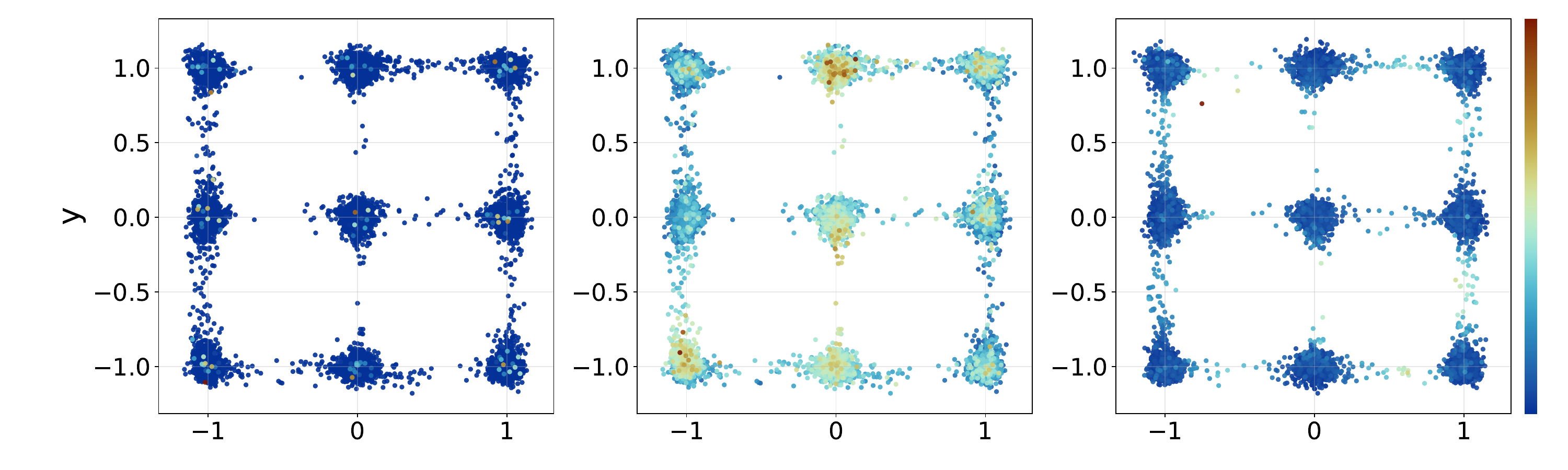}
            \put(16,30){{\scalebox{0.75}{BayesDiff}}}
            \put(48,30){{\scalebox{0.75}{LLLA}}}
            \put(74,30){{\scalebox{0.75}{FLARE (ours)}}}
            \put(99,1){\rotatebox{90}{\scalebox{0.5}{certain}}}
            \put(99,21){\rotatebox{90}{\scalebox{0.5}{ uncertain}}}
        \end{overpic}
        
    \hspace{-0.9cm}
        \begin{overpic}[width=1.0\linewidth,clip]
            {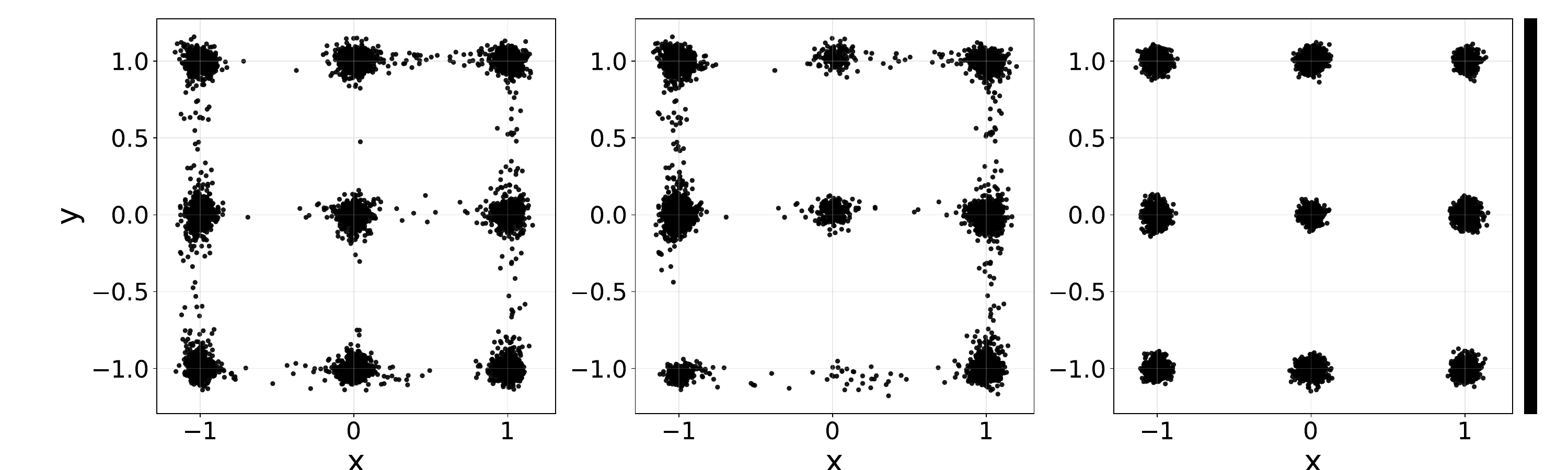}
        \put(97,3){\color{white}\rule{1.5em}{6.5em}} 
        \put(98,8){\rotatebox{90}{\scalebox{0.5}{filtered samples}}}
        \end{overpic}
\caption{Mode interpolation in a 2D Gaussian mixture adapted from~\citet{aithal2024understanding,jazbec2025generative}.
The dataset consists of nine Gaussian modes arranged on a square grid.
\textbf{Top:} uncertainty scores assigned to generated samples by BayesDiff (left), last-layer Laplace (LLLA; middle), and our method (right).
\textbf{Bottom:} the same samples after filtering using a fixed uncertainty threshold.
BayesDiff assigns low uncertainty to samples between modes, while LLLA further suppresses uncertainty due to its restriction to the final layer.
In contrast, our method assigns high epistemic uncertainty in low-density regions between modes, enabling reliable removal of low-confidence samples.}

    \label{fig:grid}
\end{figure}

In this work, we focus specifically on \emph{epistemic} uncertainty in diffusion models. Rather than asking how variable the generated samples are, we ask when the model itself is uncertain due to its parameters. Diffusion models already introduce randomness through the reverse process, which accounts for aleatoric variability and is present even when the model is well trained. As a result, measures based on sample variance reflect a mixture of stochastic sampling noise and actual model uncertainty. Interpreting this mixture as a single uncertainty signal makes it difficult to tell whether variability reflects a lack of knowledge or merely randomness in generation.
To address this, we isolate uncertainty arising from the model parameters and track how it propagates through the reverse diffusion process. We project parameter uncertainty through the Jacobian of the denoiser along a realized reverse trajectory, while excluding diffusion noise from the propagated quantity. This yields a trajectory-level view of epistemic uncertainty and clarifies when and where the model is uncertain.

Specifically, we develop a Fisher–Laplace formulation that propagates parameter uncertainty through the reverse diffusion process by using the Jacobian of the denoiser at each step.
While computing the full Fisher information is prohibitively expensive in large models, the common simplification of restricting focus to the last layer discards sensitivity information from earlier representations and leads to systematically miscalibrated epistemic uncertainty.
We therefore introduce FLARE (Fisher–Laplace Randomized Estimator), a scalable algorithm that approximates Fisher–Laplace uncertainty by sampling parameters uniformly at random across all layers of the network.
This \emph{random subnetwork approximation} preserves network-wide sensitivity structure while reducing computational cost.
We show that the resulting estimator is theoretically justified, converges rapidly as the number of sampled parameters increases, and closely tracks the behavior of the full Fisher–Laplace projection. \Cref{fig:grid} illustrates the performance of FLARE on a toy example.

We evaluate FLARE on synthetic time-series generation tasks designed to probe multimodality, extrapolation, and low-density regions.
We compare against BayesDiff~\citep{kou2023bayesdiff}, parameter-perturbation methods such as HyperDM~\citep{chan2024estimating}, and a version of FLARE that uses the last-layer approximation instead of a random subnetwork.
Across all tasks, our method recovers structured epistemic uncertainty in regions where models switch modes, traverse sparse areas, or move off distribution.
In contrast, BayesDiff conflates aleatoric and epistemic effects, LLLA suppresses uncertainty arising from earlier layers, and parameter-perturbation methods incur additional computational overhead while mixing parameter variability with diffusion noise.
Overall, FLARE provides a simple and effective alternative, yielding faithful, sample-level uncertainty diagnostics with a tunable trade-off between accuracy and runtime.

\textbf{Contributions.} Our main contributions are:

\begin{itemize}[leftmargin=*,topsep=0pt,itemsep=1ex]
    \item \textbf{Fisher–Laplace projection.} We derive a closed-form projection of parameter uncertainty into data space via the Jacobian of the denoiser, yielding an interpretable epistemic uncertainty map that separates parameter uncertainty from diffusion noise.


    \item \textbf{FLARE.} We introduce a scalable randomized approximation that subsamples parameters uniformly across the network and prove that it preserves epistemic structure with rapidly decaying relative error.

    \item \textbf{Experiments.} We demonstrate improved uncertainty-aware sample filtering on synthetic time-series tasks, achieving up to $100\%$ gap closure and consistently outperforming BayesDiff and last-layer Laplace baselines.
\end{itemize}

\subsection{Preliminaries on Diffusion Models}

We consider a discrete-time score-based diffusion model~\citep{nichol2021improved, ho2020ddpm} with time steps $t\in\{1,\dots,T\}$ and data space $\mathbb{R}^d$.
Let $\vx_0 \sim p_{\mathrm{data}}$ and define the forward diffusion process
\begin{equation}
q(\vx_t \mid \vx_{t-1}) \;=\; \mathcal{N}\!\big(\sqrt{\alpha_t}\,\vx_{t-1},\; (1-\alpha_t)\mI\big),
\end{equation}
with $\bar\alpha_t = \prod_{s=1}^t \alpha_s$, so that
\begin{equation}
q(\vx_t \mid \vx_0)=\mathcal{N}\!\big(\sqrt{\bar\alpha_t}\,\vx_0,\,(1-\bar\alpha_t)\mI\big).
\end{equation}
The reverse dynamics are parameterized by a neural network $\varepsilon_\theta(\vx_t,t)$ with parameters $\theta\in\mathbb{R}^p$ and follow the standard DDPM update~\citep{ho2020ddpm}
\begin{equation}
\begin{aligned}
  \vx_{t-1} &= a_t \vx_t - b_t\,\varepsilon_\theta(\vx_t,t) + \veta_t,\\
  \veta_t &\sim \mathcal{N}(\bm0,\tilde\beta_t \mI),
\end{aligned}
\label{eq:ddpm-update}
\end{equation}
where $a_t=\alpha_t^{-1/2}$,  
$b_t={\beta_t}/\big(\sqrt{\alpha_t}\sqrt{1-\bar\alpha_t}\big)$, and  
$\tilde\beta_t=\tfrac{1-\alpha_{t-1}}{1-\bar\alpha_t}\beta_t$ (with $\alpha_0=1$).
Throughout, we adopt the $\varepsilon$-prediction parameterization.

The denoiser $\varepsilon_\theta$ is trained via score matching on pairs $(\vx_0,t)$ sampled from the data distribution and a time prior, by minimizing a weighted mean-squared error between the predicted and true noise \citep{hang2023efficient}.
At sampling time, the diffusion schedule is fixed and randomness is introduced through $\veta_t$, which represents \emph{sampling noise} independent of the model parameters $\theta$.

\subsection{Formalizing the UQ Problem}

We define predictive uncertainty for a single reverse step as the conditional covariance
\[
\Cov(\vx_{t-1}\mid \vx_t, t).
\]
By the law of total covariance,
\begin{equation}\label{eq:prelim-ltv}
\begin{aligned}
\Cov(\vx_{t-1}\mid \vx_t, t)
&= \underbrace{\E_{\theta}\!\big[\Cov(\vx_{t-1}\mid \vx_t, t, \theta)\big]}_{\text{aleatoric}}\\
&\quad+\; \underbrace{\Cov_{\theta}\!\big(\E[\vx_{t-1}\mid \vx_t, t, \theta]\big)}_{\text{epistemic}},
\end{aligned}
\end{equation}
where expectations and covariances over $\theta$ are taken with respect to a parameter distribution, such as the posterior $p(\theta\mid\mathcal D)$ given training data $\mathcal D$.
Here, $\theta$ is treated as a random variable encoding uncertainty over which model generated the data.
The second term measures how predictions vary across plausible models (epistemic uncertainty), while the first term captures the intrinsic randomness of the reverse step given a fixed model (aleatoric uncertainty).
Together, they separate uncertainty due to limited knowledge from irreducible stochasticity in the generative process.

For the DDPM update, the conditional moments are
\begin{align}
\E[\vx_{t-1}\mid \vx_t,t,\theta] &= a_t \vx_t - b_t\,\varepsilon_\theta(\vx_t,t), \\
\Cov(\vx_{t-1}\mid \vx_t,t,\theta) &= \tilde\beta_t \mI.
\end{align}
The aleatoric component in \cref{eq:prelim-ltv} therefore corresponds directly to the diffusion noise $\tilde\beta_t \mI$, while the epistemic component arises solely from uncertainty in the denoiser parameters $\theta$.
Unlike aleatoric uncertainty, epistemic uncertainty is \emph{reducible}: it decreases as more data are observed or as the model class better matches the true generative process. \cref{eq:prelim-ltv} thus characterizes the one-step predictive variability of the reverse diffusion.
However, in practice, uncertainty is often assessed at the level of entire trajectories or in the variability of the final sample $\vx_0$.
Such quantities can be obtained by propagating and accumulating the one-step uncertainties along the reverse chain, yielding a pathwise notion of uncertainty that remains consistent with the  diffusion dynamics.

Throughout, we assume a fixed diffusion schedule, Gaussian forward noise, and a differentiable denoiser $\varepsilon_\theta$.
Unless otherwise stated, conditioning on $(\vx_t, t)$ removes randomness from the data distribution and all variables at time steps greater than \(t\), isolating the stochasticity of the reverse transition from uncertainty in $\theta$.
Because the denoiser is applied repeatedly, small modeling errors can compound along the reverse trajectory.
As a result, epistemic uncertainty is most pronounced in low-density regions, between modes, or under distribution shift, where the model must extrapolate beyond its training data.
Aleatoric variability, by contrast, reflects only the inherent randomness of sampling.
Conflating the two, such as by interpreting sample variance as total uncertainty, can thus obscure whether variability reflects genuine model uncertainty or stochastic noise.

\section{Related Work}

UQ is an active area in deep learning, motivated by the need to assess model reliability and robustness.
A standard distinction is between aleatoric uncertainty, which arises from inherent randomness in the data, and epistemic uncertainty, which reflects uncertainty about the model parameters~\citep{kendall2017uncertainties,hullermeier2021aleatoric}.

Several approaches have been proposed to capture epistemic uncertainty in neural networks.
Variational Bayesian methods introduce probabilistic structure but often underestimate uncertainty in practice.
Ensemble methods~\citep{lakshminarayanan2017deep} typically provide stronger calibration but are expensive to train.
Dropout-based approximations~\citep{gal2016dropout} offer computational efficiency but tend to blur the separation between aleatoric and epistemic effects.
More recently, Laplace-based methods~\citep{daxberger2021laplace,maddox2019simple} have emerged as scalable second-order approximations that trade some accuracy for improved efficiency.

These strategies have also been applied to generative models, including variational autoencoders (VAEs)~\citep{hirschfeld2020uncertainty,bohm2019uncertainty}, generative adversarial networks (GANs)~\citep{oberdiek2022uqgan,salimans2016improved}, and normalizing flows~\citep{papamakarios2021normalizing}.
While such methods provide useful uncertainty signals, the high-dimensional parameterization of modern generative models makes exact Bayesian inference infeasible, and practical approximations often conflate epistemic and aleatoric sources of variability.

Diffusion models introduce additional challenges, as uncertainty accumulates across multiple denoising steps.
BayesDiff~\citep{kou2023bayesdiff} applies a last-layer Laplace approximation and propagates variance through the reverse process, but this approximation collapses much of the parameter-induced structure and mixes epistemic and aleatoric contributions.
Hypernetwork-based approaches such as HyperDiffusion~\citep{chan2024estimating} introduce parameter variability via an auxiliary network, capturing richer effects at the cost of increased model complexity and training overhead.
Ensemble-based methods~\citep{berry2024casting,jazbec2025generative} quantify uncertainty through model or parameter disagreement but require multiple forward passes per sample.
Parameter perturbation approaches~\citep{shu2024zero} provide a more direct handle on epistemic variability, albeit with increased computational cost.

\section{Method}
\label{sec:method}

In this section, we introduce a Fisher information-guided method that isolates epistemic (parameter) uncertainty by projecting a Laplace posterior over model parameters into data space at each diffusion reverse step. We then propagate and accumulate these one-step contributions along the denoising trajectory via a simple linear recursion induced by the DDPM update. This recursion tracks parameter sensitivity along the reverse path, yielding a final epistemic covariance at $\vx_0$ and enabling sample-level diagnostics such as trace-based scores and variance maps.
Finally, to make the approach scalable, we introduce a randomized subnetwork estimator that subsamples columns of the generalized Gauss-Newton (GGN) matrix (i.e., the Fisher matrix under a squared-loss objective). This estimator preserves the global structure of the network at a fraction of the computational cost, while avoiding the inaccuracies associated with methods such as LLLA.

\paragraph{Aleatoric randomness and conditioning.}
We let $\eta$ denote the aggregate non-parametric randomness in the system, including
(i) training-side randomness (e.g., data sampling and stochastic optimization noise), and
(ii) sampling-side randomness (e.g., forward diffusion noise used to define $\vx_t$ from $\vx_0$, latent trajectory randomness, and, in DDPMs, explicit reverse-step noise).
For a fixed realization of $\eta$, the reverse trajectory $\{\vx_t(\theta,\eta)\}_{t=0}^T$ is a deterministic function of the model parameters $\theta$.

\subsection{Fisher Information-guided Projection}

Let $\hat\theta$ denote the maximum a posteriori (MAP) estimate of the model parameters given training data $\mathcal D$.
We approximate the posterior distribution locally using a second-order (Laplace) expansion.

The posterior covariance over parameters is
\begin{equation}
\mSigma_\theta \equiv \Cov[\vtheta \mid \mathcal D]
= \E\!\big[(\vtheta - \hat{\vtheta})(\vtheta - \hat{\vtheta})^\top \mid \mathcal D\big].
\end{equation}
Under a local Laplace approximation around $\hat\theta$, this covariance takes the form
$\mSigma_\theta \approx (\mH + \lambda \mI)^{-1}$,
where $\mH$ is the GGN matrix of the training loss and $\lambda$ is a small damping parameter.
The Jacobian of the denoiser with respect to the model parameters is
\begin{equation}
\mJ_t \coloneqq
\nabla_\theta \varepsilon_\theta(\vx_t,t)\big|_{\hat\theta}
\in \mathbb{R}^{d\times p}.
\label{eq:Jt}
\end{equation}
All Jacobians $\mJ_t$ are evaluated along the MAP sampling trajectory
$\{\vx_t(\hat\theta,\eta)\}_{t=0}^T$ corresponding to a fixed realization of $\eta$.
Throughout, we adopt the $\varepsilon$-prediction parameterization; alternative parameterizations modify only the scalar coefficients $(a_t,b_t)$.

We define the epistemic covariance of the reverse state $\vx_t$ as
\begin{equation}
\mSigma^{\mathrm{ep}}_t(\eta)
\coloneqq
\Cov_\theta\!\big(\vx_t(\theta,\eta)\mid \eta\big).
\end{equation}
Intuitively, this quantity captures how uncertainty in the model parameters propagates into uncertainty in the reverse diffusion state, holding all non-parametric randomness fixed.
\Cref{eq:one-step-fisher-method} describes how to compute its contribution over a single reverse step.

\begin{mybox}{One-step Fisher–Laplace Projection}
\setlength{\abovedisplayskip}{4pt}
\setlength{\belowdisplayskip}{4pt}
\setlength{\abovedisplayshortskip}{4pt}
\setlength{\belowdisplayshortskip}{4pt}
\begin{equation}
\label{eq:one-step-fisher-method}
\mSigma^{\mathrm{ep}}_{t-1\mid t}(\eta)
=
b_t^2\,\mJ_t\,\mSigma_\theta\,\mJ_t^\top .
\end{equation}
\end{mybox}

The validity of the one-step projection in \Cref{eq:one-step-fisher-method}
is discussed in \Cref{sec:theory_one_step}.

\subsection{Propagation through the Denoising Steps}

We now describe the central mechanism of our method: a recursion that propagates epistemic uncertainty along the reverse diffusion trajectory.
Conditioning on a fixed realization of the non-parametric randomness $\eta$, we propagate epistemic covariance backward in time using a linear update induced by the DDPM dynamics.
\cref{eq:ep-recursion} accumulates the effect of parameter uncertainty across successive denoising steps.
\begin{mybox}{\textcolor{black}{Multi-step Epistemic Recursion}}
\setlength{\abovedisplayskip}{4pt}
\setlength{\belowdisplayskip}{4pt}
\setlength{\abovedisplayshortskip}{4pt}
\setlength{\belowdisplayshortskip}{4pt}
\begin{equation}
\label{eq:ep-recursion}
\mSigma^{\mathrm{ep}}_{t-1}(\eta)
=
a_t^2\,\mSigma^{\mathrm{ep}}_t(\eta)
+
b_t^2\,\mJ_t\,\mSigma_\theta\,\mJ_t^\top .
\end{equation}
\end{mybox}

The recursion in \Cref{eq:ep-recursion} is justified and analyzed in \Cref{sec:theory_recursion}.
Unrolling it yields the closed-form expression in~\cref{eq:ep-closed-form}.

\begin{mybox}{\textcolor{black}{Unrolled Epistemic Accumulation}}
\setlength{\abovedisplayskip}{4pt}
\setlength{\belowdisplayskip}{4pt}
\setlength{\abovedisplayshortskip}{4pt}
\setlength{\belowdisplayshortskip}{4pt}
\begin{equation}
\label{eq:ep-closed-form}
\mSigma^{\mathrm{ep}}_{0}(\eta)
=\sum_{s=1}^{T}\Bigg(\prod_{j< s} a_j\Bigg)^{\!2}
\,b_s^2\; \mJ_s\,\mSigma_\theta\,\mJ_s^\top.
\end{equation}
\end{mybox}

While \Cref{eq:ep-closed-form} provides a clear characterization of epistemic uncertainty, directly evaluating this expression requires projecting a full-parameter Laplace posterior through the network at every reverse step.

\paragraph{Computational considerations.}
The full Fisher--Laplace projection
$b_t^2 \mJ_t \mSigma_\theta \mJ_t^\top$
provides the most faithful estimate of epistemic uncertainty, as it accounts for parameter sensitivity throughout the entire network.
However, forming, storing, or inverting the full posterior covariance
$\mSigma_\theta \in \mathbb{R}^{p \times p}$
is computationally prohibitive in large-scale models, where the number of parameters $p$ is large.
This motivates practical approximations that trade accuracy for scalability.

A common simplification \citep{daxberger2021laplace} restricts the Laplace posterior to the \emph{final affine layer} of the network.
For example, \citet{kou2023bayesdiff} adopt this approach to quantify uncertainty in diffusion models.
Algebraically, this replaces $\mSigma_\theta$ with a covariance
$\mSigma_{\theta,\mathrm{last}}$ defined only over the final-layer parameters, and projects uncertainty using the corresponding subset of columns of $\mJ_t$.
While computationally efficient, this approximation discards parameter sensitivity originating in earlier layers and propagated through the network.
As we show in \cref{sec:exp}, it can underestimate epistemic structure along the reverse trajectory, particularly when uncertainty arises from deep feature representations rather than the final linear head.
These limitations motivate an alternative that remains computationally tractable while retaining sensitivity contributions from across the network.

\subsection{Randomized Subnetwork Approximation}
\label{sec:practical_estimators}

We propose a scalable alternative that preserves network-wide parameter sensitivity while substantially reducing computational cost.
Our approach approximates the Fisher--Laplace projection by operating on a randomly selected subnetwork of parameters at each reverse step.
Rather than restricting uncertainty to a fixed subset of parameters, the method randomly subsamples columns of both the Jacobian and the GGN matrix.
This preserves sensitivity contributions from across the network while avoiding the cost of full Fisher inversion.
The procedure is given by:

\begin{tcolorbox}[left=1mm,right=1mm]
\begin{enumerate}
    \item Sample a uniform index set $I \subset \{1,\ldots,p\}$ with cardinality $|I| = m \ll p$.
    
    \item Restrict the GGN matrix to the selected coordinates, forming
    $\mH_{I,I} \in \mathbb{R}^{m \times m}$, and define the subnetwork posterior covariance
    $$\mSigma_{\mathrm{sub}} = (\mH_{I,I} + \lambda \mI)^{-1}.$$
    
    \item Replace the full Jacobian by its selected columns
    $\mJ_{t,I} \in \mathbb{R}^{d \times m}$ and use this subnetwork within the recursion of \cref{eq:ep-recursion}:
    \begin{equation*}
      \mSigma^{\mathrm{ep}}_{t-1}
      \;=\; a_t^2\,\mSigma^{\mathrm{ep}}_{t}
      \;+\; b_t^2\,\mJ_{t,I}\,\mSigma_{\mathrm{sub}}\,\mJ_{t,I}^\top.
    \end{equation*}
\end{enumerate}
\end{tcolorbox}

This randomized approximation preserves the structure of the original Fisher--Laplace projection while reducing both memory and computational requirements.
Unlike LLLA, it captures epistemic contributions originating in intermediate representations and propagated through the network.
\cref{alg:Algorithm1} summarizes the resulting Fisher--Laplace Randomized Estimator (FLARE), which integrates the randomized algorithm with our recursion formula to produce both a sample $\hat{\vx}_0$ and its associated epistemic covariance.

\paragraph{Theoretical guarantees.}
In \Cref{sec:theory_randomized}, we analyze the randomized subnetwork estimator using tools from randomized numerical linear algebra~\citep{murray2023randomized}.
Under mild regularity assumptions, we show that the approximation error of the randomized projection decays as $\mathcal{O}(1/\sqrt{m})$, yielding a principled trade-off between computational cost and estimation accuracy.
By contrast, last-layer Laplace approximations do not admit comparable guarantees under the same assumptions, as they restrict uncertainty to a fixed subset of parameters corresponding to the final layer and thus discard network-wide sensitivity.

\paragraph{Approximating the Hessian.}
Our method requires access to curvature information of the denoiser network $\varepsilon_\theta(\vx_t,t)$ in the form of a GGN matrix. For the mean-squared error objective used to train the denoiser, the GGN matrix at the MAP estimate satisfies
\begin{equation}
  \mH \;\approx\; \frac{1}{n}\sum_{i=1}^n \mJ_i^\top \mJ_i,
  \qquad
  \mJ_i \coloneqq \nabla_\theta \varepsilon_\theta(\vx_i,t_i)\big|_{\hat\theta},
\end{equation}
where $(\vx_i,t_i)$ are training pairs.
Equivalently, stacking the Jacobians row-wise to form the population Jacobian
$\mJ_{\mathrm{pop}} \in \mathbb{R}^{(nd)\times p}$ yields
$\mH \approx \tfrac{1}{n}\mJ_{\mathrm{pop}}^\top \mJ_{\mathrm{pop}}$.

Restricting $\mJ_{\mathrm{pop}}$ to the final-layer coordinates recovers the standard LLLA.
In contrast, our randomized subnetwork approach uniformly subsamples $m$ columns of $\mJ_{\mathrm{pop}}$, yielding a reduced GGN that retains sensitivity contributions from across the network.
This reduces both compute and memory costs from scaling with $p$ to scaling with $m$, while avoiding the structural bias introduced by last-layer restriction.

To disentangle the effects of full-curvature modeling from epistemic--aleatoric separation, we include controlled full-Hessian ablations and four-way uncertainty comparisons in Appendix~\ref{app:cross-term}, Fig.~\ref{fig:fourway_uq}.

\begin{algorithm}[t]
\caption{FLARE}
\label{alg:Algorithm1}
\footnotesize

\SetKwInOut{Input}{Inputs}
\SetKwInOut{Output}{Outputs}

\Input{
diffusion model $\varepsilon_\theta$;
subnetwork size $m$;
Hessian--vector products for $(\mH + \lambda \mI)$;
DDPM schedule $\{\beta_t\}_{t=1}^T$;
$\vx_T \sim \mathcal{N}(0,\mI)$.
}

\Output{
generated sample $\hat{\vx}_0$;
epistemic covariance $\mSigma^{\mathrm{ep}}(\hat{\vx}_0)$.
}

\BlankLine
\textbf{Precompute:}
$\alpha_t \gets 1 - \beta_t$\;
$\bar{\alpha}_t \gets \prod_{s \le t} \alpha_s$\;
$a_t \gets \alpha_t^{-1/2}$\;
$b_t \gets \beta_t / (\sqrt{\alpha_t}\sqrt{1-\bar{\alpha}_t})$.

\BlankLine
\textbf{Sample subnetwork (once):}
draw $I \subset [p]$\;
define solve operator $\mSigma_{\mathrm{sub}} v \coloneqq (\mH_{I,I} + \lambda \mI)^{-1} v$

\BlankLine
\textbf{Initialize:}
$\hat{\vx}_T \gets \vx_T$,\;
$\mSigma^{\mathrm{ep}}(\hat{\vx}_T) \gets \bm{0}$\;

\BlankLine
\For{$t = T$ \KwTo $1$}{
  $\hat{\varepsilon}_t \gets \varepsilon_\theta(\hat{\vx}_t, t)$\;
  $\hat{\vx}_{t-1} \gets a_t\!\left(\hat{\vx}_t - \frac{\beta_t}{\sqrt{1-\bar{\alpha}_t}}\,\hat{\varepsilon}_t\right)$\;
  $\mJ_{t,I} \gets \nabla_{\theta_I}\varepsilon_\theta(\hat{\vx}_t, t)$\;
  $\Delta_t \gets \mJ_{t,I}\,\mSigma_{\mathrm{sub}}\,\mJ_{t,I}^\top$\;
  $\mSigma^{\mathrm{ep}}(\hat{\vx}_{t-1}) \gets a_t^2\,\mSigma^{\mathrm{ep}}(\hat{\vx}_t) + b_t^2\,\Delta_t$\;
}

\BlankLine
\Return{$\hat{\vx}_0,\ \mSigma^{\mathrm{ep}}(\hat{\vx}_0)$}
\end{algorithm}

\subsection{Diagnostics}

\cref{alg:Algorithm1} produces, for each generated sample $\hat{\vx}_0$, an associated epistemic covariance $\mSigma^{\mathrm{ep}}_0(\eta)$.
We use this covariance to define scalar diagnostics that quantify epistemic uncertainty at the sample level.

Our primary metric is the trace $\tr(\mSigma_0^{\rm ep}(\eta))$, which equals the sum of per-dimension epistemic variances.
When reporting an average variance, we use the normalized trace $\tr(\mSigma_0^{\rm ep}(\eta)) / d$.
For cross-dataset comparisons, we apply additional normalization to ensure scale invariance.
In all experiments, we rank samples by these scores and retain those with the lowest $\tr(\mSigma_0^{\rm ep}(\eta))$ as our most confidently generated samples.

In practice, we often seek scalar diagnostics without explicitly forming the full covariance matrix.
For example, computing $\tr(\mSigma^{\mathrm{ep}}_0(\eta))$ requires evaluating
$\tr(\mSigma^{\mathrm{ep}}_{t-1\mid t}(\eta))$ at each reverse step along the denoising trajectory.
Let $\vg_{t,k} \in \mathbb{R}^p$ denote the gradient of the $k$-th output of $\varepsilon_\theta(\vx_t, t)$, i.e., the $k$-th row of $\mJ_t$.
Then,
\[
    \tr(\mSigma_{t-1\mid t}^{\rm ep}(\eta)) = b_t^2 \sum_{k=1}^d u_{t,k},
\]
where $u_{t,k} \coloneqq \vg_{t,k}^\top \mSigma_\theta \vg_{t,k}$.
Each $u_{t,k}$ can be computed efficiently using the conjugate gradient method.
Specifically, since $\mSigma_\theta \approx (\mH + \lambda \mI)^{-1}$, we solve
\begin{equation}
(\mH + \lambda \mI)\, \vz = \vg_{t,k},
\end{equation}
where $\vz \in \mathbb{R}^p$ denotes the solution to the corresponding linear system,
using Hessian--vector products (e.g., the empirical Fisher),
evaluate $u_{t,k} = \vg_{t,k}^\top \vz$, and sum across dimensions.
This avoids explicit matrix inversion and is standard in Laplace-based approximations.

\subsection{Relation to BayesDiff}
\label{sec:bayesdiff}

BayesDiff~\citep{kou2023bayesdiff} estimates \emph{pixel-wise predictive uncertainty}
by placing a Laplace posterior on the \emph{noise predictions} of a diffusion model
and propagating data-space variances through the reverse diffusion process.
Its recursion tracks the total variance of the latent state, combining diffusion noise,
output uncertainty of the denoiser, and a data-space cross-covariance
$\Cov(x_t,\varepsilon_t)$ estimated via Monte Carlo sampling.
In contrast, our method operates directly in \emph{parameter space}.
We define epistemic uncertainty as the covariance of the reverse trajectory
induced solely by uncertainty in the model parameters, conditioning on a fixed
realization of the diffusion randomness.
This leads to a recursion that propagates parameter-induced variability through the reverse dynamics via
\[
\Cov_\theta\!\big(\E[x_{t-1}\mid x_t,\theta]\big)
= b_t^2\, J_t\, \Sigma_\theta\, J_t^\top,
\]
a quantity that is neither estimated nor approximated in BayesDiff. As a result, the two approaches quantify fundamentally different notions of uncertainty.
BayesDiff targets total predictive uncertainty in data space, aggregating multiple sources of variability,
whereas our method isolates epistemic uncertainty arising from parameter uncertainty and propagates it explicitly through the reverse diffusion process.

\section{Theory}
\label{sec:theory}

Section~\ref{sec:method} introduced two computational primitives:
(i) the \emph{one-step Fisher--Laplace projection} in \Cref{eq:one-step-fisher-method}, and
(ii) the \emph{multi-step epistemic propagation} recursion in \Cref{eq:ep-recursion}.
This section provides theoretical justification for both constructions, as well as for the
randomized subnetwork approximation introduced in \Cref{sec:practical_estimators}.
All proofs are deferred to the appendix, as indicated throughout.

\subsection{Epistemic Uncertainty and Propagation}
\label{sec:theory_recursion}

We justify the recursion in \Cref{eq:ep-recursion} used to propagate epistemic covariance
along the reverse diffusion trajectory.
We proceed in three steps:
first, we establish the one-step Fisher--Laplace projection term
(\Cref{eq:one-step-fisher-method});
second, we derive a general propagation identity that includes a cross-covariance term;
and finally, we state a local decoupling condition under which this cross term vanishes,
yielding the recursion used in our method.

\paragraph{One-step Fisher--Laplace projection.}
\label{sec:theory_one_step}

Our analysis follows from the law of total variance together with the DDPM reverse update
\citep{ho2020ddpm}.
Let $\hat\theta$ denote the maximum a posteriori (MAP) estimate of the model parameters,
around which the posterior distribution is locally approximated using a second-order
(Laplace) expansion~\citep{daxberger2021laplace}.

\begin{tcolorbox}[left=1mm,right=1mm]
\begin{proposition}[One-step Fisher--Laplace projection]
\label{prop:one-step-fisher}
Under a local Gaussian posterior $\theta \sim \mathcal{N}(\hat\theta,\mSigma_\theta)$,
independence of the reverse-step noise $\eta_t$ from the model parameters $\theta$,
and a first-order (delta-method) linearization of the one-step conditional mean around
$\hat\theta$, the conditional covariance at step $t$ admits the decomposition
\begin{equation}
\begin{aligned}
\Cov_\theta\!\big(
\vx_{t-1} \mid \vx_t, t, \eta
\big)
&=
\underbrace{\tilde\beta_t \mI}_{\text{aleatoric}}
\\
&\quad
+
\underbrace{
b_t^{2}\,\mJ_t\,\mSigma_\theta\,\mJ_t^\top
}_{\text{epistemic}}
+
o(\lVert\mSigma_\theta\rVert).
\end{aligned}
\end{equation}
\end{proposition}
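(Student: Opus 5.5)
The plan is to write the reverse update as a function of the two independent sources of randomness and apply the law of total covariance. Conditioning on $(\vx_t,t)$, the DDPM step \cref{eq:ddpm-update} gives
\[
\vx_{t-1} = \mu_t(\theta) + \veta_t, \qquad \mu_t(\theta) \coloneqq a_t\vx_t - b_t\,\varepsilon_\theta(\vx_t,t),
\]
with $\veta_t\sim\mathcal N(\bm0,\tilde\beta_t\mI)$ independent of $\theta$. Here the conditioning on $\eta$ in the statement fixes only the upstream non-parametric randomness (which fixes $\vx_t$), while $\veta_t$ is left random. This is the reading under which the aleatoric term $\tilde\beta_t\mI$ appears.

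\textbf{Step 1: separate the two sources.} Because $\theta$ and $\veta_t$ are independent, the covariance of the sum splits into three pieces:
\[
\Cov(\vx_{t-1}\mid\cdot) = \Cov_\theta(\mu_t(\theta)) + \Cov(\veta_t) + \Cov(\mu_t(\theta),\veta_t) + \Cov(\veta_t,\mu_t(\theta)).
\]
Independence makes both cross-covariances vanish. Equivalently, one can use \cref{eq:prelim-ltv}: the inner conditional covariance equals $\tilde\beta_t\mI$ for every $\theta$, so its expectation is $\tilde\beta_t\mI$, and the conditional mean is $\mu_t(\theta)$. Either route gives the exact identity
\[
\Cov = \tilde\beta_t\mI + b_t^2\,\Cov_\theta\big(\varepsilon_\theta(\vx_t,t)\big),
\]
since $a_t\vx_t$ is constant given the conditioning.

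\textbf{Step 2: delta method.} Taylor-expand around the MAP estimate:
\[
\varepsilon_\theta(\vx_t,t) = \varepsilon_{\hat\theta}(\vx_t,t) + \mJ_t(\theta-\hat\theta) + R(\theta), \qquad \|R(\theta)\| \le \tfrac{L}{2}\|\theta-\hat\theta\|^2,
\]
assuming locally Lipschitz parameter-Jacobians (i.e., $\varepsilon$ is $C^2$ in $\theta$ with bounded Hessian near $\hat\theta$). Write $\delta=\theta-\hat\theta\sim\mathcal N(0,\mSigma_\theta)$. Then
\[
\Cov_\theta(\varepsilon) = \mJ_t\mSigma_\theta\mJ_t^\top + \Cov(\mJ_t\delta,R) + \Cov(R,\mJ_t\delta) + \Cov(R).
\]
Gaussian moment bounds give $\E\|\delta\|^k = O(\|\mSigma_\theta\|^{k/2})$. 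Hence the cross terms are bounded by $\|\mJ_t\|\,(\E\|\delta\|^2)^{1/2}(\E\|R\|^2)^{1/2} = O(\|\mSigma_\theta\|^{3/2})$, and $\Cov(R) = O(\E\|\delta\|^4) = O(\|\mSigma_\theta\|^2)$. Both are $o(\|\mSigma_\theta\|)$. Here the dimension $p$ is treated as fixed, and $\E\|\delta\|^2=\tr\mSigma_\theta\le p\|\mSigma_\theta\|$. Multiplying by $b_t^2$ yields the stated decomposition.

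\textbf{Main obstacle.} The delicate step is controlling the remainder $R$ under a Gaussian law with unbounded support. The Taylor bound only holds locally, so I would split the expectation on the events $\{\|\delta\|\le r\}$ and $\{\|\delta\|>r\}$. On the second event, I would use Gaussian tail decay, which is exponentially small in $r^2/\|\mSigma_\theta\|$, together with a polynomial growth assumption on $\varepsilon_\theta$ in $\theta$. This makes the tail contribution $o(\|\mSigma_\theta\|)$ as well. If one prefers to avoid this, the alternative is to state the linearization as an assumption, in which case the proposition holds exactly with zero remainder.
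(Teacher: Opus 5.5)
Your proposal is correct and follows essentially the same route as the paper's proof: the law of total covariance isolates the aleatoric term $\tilde\beta_t\mI$, and a first-order Taylor expansion of $\varepsilon_\theta$ around $\hat\theta$ with a quadratic remainder gives the epistemic term. Your version is more careful than the paper's in three places: you give explicit Gaussian-moment rates ($O(\lVert\mSigma_\theta\rVert^{3/2})$ for the cross terms, $O(\lVert\mSigma_\theta\rVert^{2})$ for $\Cov(R)$), your tail-splitting argument handles the unbounded Gaussian support that the paper covers only by citing ``standard delta-method bounds,'' and your locally-Lipschitz-Jacobian hypothesis is what the $O(\lVert\theta-\hat\theta\rVert^2)$ remainder actually requires, whereas the paper's appendix assumption states only continuous differentiability.
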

\end{tcolorbox}

Consequently, the epistemic contribution to the reverse-step uncertainty, which we isolate and propagate in our method, is given by
\begin{align}
\mSigma^{\mathrm{ep}}_{t-1\mid t}(\eta)
&\coloneqq
\Cov_\theta\!\big(
\E[\vx_{t-1} \mid \vx_t, t, \theta, \eta]
\;\big|\;\eta
\big) \nonumber \\
&\approx
b_t^{2}\,\mJ_t\,\mSigma_\theta\,\mJ_t^\top .
\label{eqn:conditional-covar}
\end{align}
This result shows that epistemic uncertainty can be expressed explicitly in terms of
parameter sensitivity, without conflating parameter uncertainty with diffusion noise. 
The above result holds under mild regularity conditions, formalized in
\Cref{app:prop-proof}.
In particular, we assume that the denoiser $\varepsilon_\theta(\cdot,t)$ is continuously
differentiable in $\theta$ with a locally Lipschitz Jacobian, that the diffusion schedule
is fixed, and that the injected noise $\eta_t$ (a component of $\eta$) is independent of the model parameters.
The posterior covariance $\Cov[\theta \mid \mathcal D] = \mSigma_\theta$ is assumed to have
finite second moments.
Under a Laplace approximation, we take
\[
\mSigma_\theta \approx (\mH + \lambda \mI)^{-1},
\]
where $\mH$ denotes the Gauss--Newton approximation to the Hessian of the weighted
mean-squared error (MSE) training loss~\citep{ritter2018scalable}.

\paragraph{Propagation through the denoising steps.}

\cref{eqn:conditional-covar} characterizes the epistemic uncertainty contributed by a single reverse step.
To understand how this uncertainty accumulates along the reverse trajectory, we define
\[
\mSigma^{\mathrm{ep}}_t(\eta)
\coloneqq
\Cov_\theta\!\big(\vx_t(\theta,\eta)\mid \eta\big),
\]
the epistemic covariance of the reverse state at time $t$, conditioned on a fixed realization of the non-parametric randomness $\eta$.

Under the $\varepsilon$-prediction DDPM update and a first-order linearization around the MAP estimate, the epistemic covariance evolves according to
\begin{equation}
\begin{aligned}
\mSigma^{\mathrm{ep}}_{t-1}(\eta)
&=
a_t^2\,\mSigma^{\mathrm{ep}}_{t}(\eta)
+
b_t^2\,\mJ_t\,\mSigma_\theta\,\mJ_t^\top
\\
&\quad
+
2a_t b_t\,\mC_t(\eta)
+
o(\lVert\mSigma_\theta\rVert),
\end{aligned}
\label{eq:ep-recursion-with-cross}
\end{equation}
where the cross-covariance term is defined as
\[
\mC_t(\eta)
\coloneqq
\Cov_\theta\!\big(\vx_t(\theta,\eta),\,\mJ_t(\theta-\hat\theta)\mid\eta\big).
\]

To obtain a tractable recursion, we consider the regime in which the parameter-induced perturbation of the reverse state
$\vx_t(\theta,\eta)$ is approximately uncorrelated with the linearized denoiser response evaluated at the MAP iterate.
Conditioned on a given realization of the non-parametric randomness $\eta$, this corresponds to the approximation
\begin{equation}
\Cov_\theta\!\Big(
\vx_t(\theta,\eta),\;
\mJ_t(\theta-\hat\theta)
\;\Big|\;\eta
\Big)
\;\approx\;
\mathbf{0}.
\label{eq:decoupling}
\end{equation}

Under this local decoupling approximation, the cross-covariance term in
\cref{eq:ep-recursion-with-cross} vanishes, yielding the additive recursion
in \Cref{eq:ep-recursion} used in our method.
Importantly, this approximation is not imposed arbitrarily.
In Appendix~\ref{app:cross-term}, we show that the cross term is
(i) uniformly bounded and absorbable into the leading variance terms,
(ii) asymptotically negligible under posterior concentration and local smoothness of the reverse trajectory, and
(iii) numerically insignificant under a full-Hessian Laplace posterior, as verified by Monte Carlo evaluation.

\subsection{Randomized Estimator}
\label{sec:theory_randomized}

We now provide theoretical support for the randomized subnetwork approximation introduced in
\Cref{sec:practical_estimators}.
Using tools from randomized numerical linear algebra~\citep{murray2023randomized,erichson2019randomized}, we show that the random subnetwork estimator
converges rapidly to the full Fisher--Laplace covariance under mild regularity conditions, while
last-layer restrictions generally do not admit comparable guarantees.
Our main result establishes a relative error bound in trace norm that decays with the subnetwork
size~$m$.

\paragraph{Setup.}
We analyze the approximation of the one-step epistemic covariance
$\mSigma^{\mathrm{ep}}_{t-1\mid t}$.
Under the generalized Gauss--Newton modeling assumption and
\Cref{eqn:conditional-covar}, this covariance can be written as
\begin{equation}
    \mSigma^{\mathrm{ep}}_{t-1\mid t}
    \;\approx\;
    \mJ_t\big(\mJ_{\mathrm{pop}}^\top\mJ_{\mathrm{pop}} + \lambda \mI\big)^{-1}\mJ_t^\top,
\end{equation}
where $\mJ_t$ denotes the Jacobian of the denoiser at step~$t$, and
$\mJ_{\mathrm{pop}}$ is the population Jacobian formed by stacking training-set Jacobians.

Since the regularization parameter $\lambda$ is typically very small
(e.g., $\lambda = 10^{-6}$ in our experiments),
we simplify the analysis by considering the limit $\lambda \to 0$.
This reduces the problem to approximating
\begin{equation}
    \mV \coloneqq \mJ_t\big(\mJ_{\mathrm{pop}}^\top\mJ_{\mathrm{pop}}\big)^{+}\mJ_t^\top,
\end{equation}
where $(\cdot)^{+}$ denotes the Moore--Penrose pseudoinverse.

Directly constructing $\mV$ is computationally infeasible in large models, as it requires access
to the full Jacobians.
Both the LLLA and the randomized subnetwork method approximate
$\mV$ by restricting attention to a subset of parameter coordinates.
Specifically, they form reduced Jacobians
$\tilde\mJ_t$ and $\tilde\mJ_{\mathrm{pop}}$ and compute
\begin{equation}
    \tilde\mV \coloneqq
    \tilde\mJ_t\big(\tilde\mJ_{\mathrm{pop}}^\top \tilde\mJ_{\mathrm{pop}}\big)^{+}\tilde\mJ_t^\top.
\end{equation}
The two approaches differ only in how these reduced Jacobians are chosen.
LLLA deterministically restricts to the coordinates corresponding to the final affine layer,
whereas the randomized subnetwork method samples a uniformly random subset of $m$ coordinates.

\paragraph{Least-squares reformulation.}
The following observation, proved in \Cref{app:theory}, allows us to analyze these approximations
through an equivalent least-squares perspective.

\begin{lemma}
\label{lem:cov-to-ls}
Let $\mA \in \mathbb{R}^{p \times k}$ and $\mB \in \mathbb{R}^{p \times \ell}$ be full-rank matrices
with $k,\ell < p$.
Let $\mX_\star \in \mathbb{R}^{k \times \ell}$ denote the solution to
$\min_{\mX}\|\mA\mX - \mB\|_{\rm F}$.
Then $\mB^\top(\mA\mA^\top)^+\mB = \mX_\star^\top\mX_\star$.
\end{lemma}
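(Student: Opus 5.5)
Since $\mA$ has full column rank $k<p$, the least-squares problem $\min_{\mX}\|\mA\mX-\mB\|_{\rm F}$ decouples column by column and has the unique solution $\mX_\star = \mA^+\mB = (\mA^\top\mA)^{-1}\mA^\top\mB$. The plan is to express $(\mA\mA^\top)^+$ through $\mA^+$ and then substitute.

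The key identity is
\[
(\mA\mA^\top)^+ = (\mA^+)^\top \mA^+ .
\]
I would verify it in one of two ways. The first is to check the four Moore--Penrose conditions for $\mM=\mA\mA^\top$ with candidate $\mathbf{G}=(\mA^+)^\top\mA^+$, using $\mA^+\mA=\mI_k$ and that $\mA\mA^+$ is the orthogonal projector onto $\mathrm{range}(\mA)$. For instance,
\[
\mM\mathbf{G} = \mA\mA^\top(\mA^+)^\top\mA^+ = \mA(\mA^+\mA)^\top\mA^+ = \mA\mA^+,
\]
which is symmetric. The remaining conditions follow similarly. The cleaner alternative is the thin SVD $\mA=\mU\mSigma\mV^\top$ with $\mSigma\in\R^{k\times k}$ invertible. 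Then $\mA\mA^\top=\mU\mSigma^2\mU^\top$, so $(\mA\mA^\top)^+=\mU\mSigma^{-2}\mU^\top$. Also $\mA^+=\mV\mSigma^{-1}\mU^\top$, so $(\mA^+)^\top\mA^+=\mU\mSigma^{-1}\mV^\top\mV\mSigma^{-1}\mU^\top=\mU\mSigma^{-2}\mU^\top$, and the two sides agree. I would present the SVD version.

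Substituting the identity gives
\[
\mB^\top(\mA\mA^\top)^+\mB = \mB^\top(\mA^+)^\top\mA^+\mB = (\mA^+\mB)^\top(\mA^+\mB) = \mX_\star^\top\mX_\star,
\]
which is the claim.

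The only real obstacle is bookkeeping. The full-rank assumption on $\mA$ is what makes the minimizer unique and equal to $\mA^+\mB$. Without it one would pick the minimum-norm solution, and the same argument still goes through. The full-rank assumption on $\mB$ is not needed. I would remark on both points, because in the application $\mA$ plays the role of $\mJ_{\mathrm{pop}}^\top$ or $\tilde\mJ_{\mathrm{pop}}^\top$ and $\mB$ the role of $\mJ_t^\top$, and rank deficiency could occur there.
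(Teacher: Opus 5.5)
Your proposal is correct and follows the paper's proof: both combine the identity $(\mA\mA^\top)^+ = (\mA^+)^\top\mA^+$ with $\mX_\star = \mA^+\mB$, and then expand $\mB^\top(\mA^+)^\top\mA^+\mB = (\mA^+\mB)^\top(\mA^+\mB)$. The paper states the pseudoinverse identity without proof; your thin-SVD verification supplies that step, and your remarks (minimum-norm solution when $\mA$ is rank-deficient, full rank of $\mB$ not needed) add detail the paper omits.
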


Applying this lemma yields the representations
\begin{alignat*}{3}
    \mV &= \mX_\star^\top\mX_\star,
    \quad
    \mX_\star \coloneqq \argmin_{\mX}\|\mJ_{\mathrm{pop}}^\top\mX - \mJ_t^\top\|_{\rm F}^2, \\
    \tilde\mV &= \tilde\mX^\top\tilde\mX,
    \quad
    \tilde\mX \coloneqq \argmin_{\mX}\|\tilde\mJ_{\mathrm{pop}}^\top\mX - \tilde\mJ_t^\top\|_{\rm F}^2.
\end{alignat*}
Consequently, $\tilde\mV$ is a good approximation of $\mV$ if and only if the reduced least-squares
problem provides an accurate approximation of the full one.

From this perspective, the LLLA implicitly assumes that the least-squares
problem induced by $\mJ_{\mathrm{pop}}^\top$ and $\mJ_t^\top$ can be well approximated using only the
coordinates corresponding to the final layer.
This assumption is generally not guaranteed to hold for overdetermined least-squares problems and
helps explain the empirical limitations of LLLA observed in practice.

\paragraph{Randomized subnetwork analysis.}
In contrast, the randomized subnetwork approximation corresponds to solving the same least-squares
problem using a uniformly random subset of rows.
This strategy is well studied in randomized numerical linear algebra and is known to yield reliable
approximations under mild conditions.
Specifically, we assume:
\begin{itemize}
    \item \textbf{Conditioning.} The population Jacobian matrix $\mJ_{\mathrm{pop}}^\top$ has condition number $\kappa$.
    \item \textbf{Coherence.} The population Jacobian has coherence $\mu$, ruling out concentration
    of mass on a small subset of coordinates.
    \item \textbf{Alignment.} The step-wise Jacobian $\mJ_t$ has nontrivial overlap with the rowspace
    of $\mJ_{\mathrm{pop}}$, quantified by a constant $\gamma > 0$.
\end{itemize}

Under these conditions, we obtain the following trace-norm approximation guarantee.

\begin{tcolorbox}[left=1mm,right=1mm]
\begin{theorem}
\label{thm:random-subnetwork}
Let $\mJ_{\mathrm{pop}}\in\mathbb{R}^{nd \times p}$ and $\mJ_t\in\mathbb{R}^{d \times p}$ satisfy the
assumptions above.
Then, with probability at least $1-\delta$, the randomized subnetwork estimator with subnetwork size
$m$ satisfies
\begin{equation}
    \big|\tr(\mV - \tilde\mV)\big|
    \;\leq\;
    \tilde{\mathcal O}\!\left(
    \sqrt{\frac{p\,\gamma\,\mu}{m\,\delta}}\,\kappa
    \right)\tr(\mV).
\end{equation}
\end{theorem}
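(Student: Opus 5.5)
Via \Cref{lem:cov-to-ls}, with $\mA=\mJ_{\mathrm{pop}}^\top\in\R^{p\times nd}$ and $\mB=\mJ_t^\top\in\R^{p\times d}$, I rewrite the claim as a statement about least-squares solutions:
\[
\tr(\mV)=\|\mX_\star\|_{\rm F}^2,\qquad \tr(\tilde\mV)=\|\tilde\mX\|_{\rm F}^2 .
\]
Here $\tilde\mX$ solves the row-subsampled problem $\min_{\mX}\|\mP(\mA\mX-\mB)\|_{\rm F}$, and $\mP\in\R^{m\times p}$ selects $m$ uniformly random rows, i.e.\ parameter coordinates. I rescale $\mP$ by $\sqrt{p/m}$; this does not change the minimizer. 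Then
\[
\bigl|\tr(\mV-\tilde\mV)\bigr| = \bigl|\|\mX_\star\|_{\rm F}^2-\|\tilde\mX\|_{\rm F}^2\bigr| \le \|\tilde\mX-\mX_\star\|_{\rm F}\bigl(2\|\mX_\star\|_{\rm F}+\|\tilde\mX-\mX_\star\|_{\rm F}\bigr).
\]
So it suffices to show $\|\tilde\mX-\mX_\star\|_{\rm F}\le \epsilon\,\|\mX_\star\|_{\rm F}$ with $\epsilon=\tilde{\mathcal O}\bigl(\kappa\sqrt{p\gamma\mu/(m\delta)}\bigr)$. The bound is trivial once $\epsilon\gtrsim1$, so I may assume $\epsilon\le1$, and then the right-hand side is at most $3\epsilon\|\mX_\star\|_{\rm F}^2$.

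\textbf{Step 1 (subspace embedding).} Write $\mA=\mU\mSigma\mV_A^\top$ as a thin SVD. The coherence of $\mU$ is at most $\mu$, meaning the largest row leverage score is at most $\mu r/p$, where $r=\mathrm{rank}(\mA)$. A matrix Chernoff bound for uniform row sampling then gives $\|\mU^\top\mP^\top\mP\mU-\mI\|_2\le 1/2$ with probability $1-\delta/2$ once $m\gtrsim \mu r\log(r/\delta)$. This is the source of the $\tilde{\mathcal O}$ logarithmic factors.

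\textbf{Step 2 (approximate matrix multiplication).} Let $\mB_\perp=(\mI-\mU\mU^\top)\mB$ be the residual, so that $\mU^\top\mB_\perp=0$. The standard sketched least-squares analysis gives
\[
\mA(\tilde\mX-\mX_\star)=\mU(\mU^\top\mP^\top\mP\mU)^{-1}\mU^\top\mP^\top\mP\mB_\perp .
\]
Combining this with Step 1,
\[
\|\tilde\mX-\mX_\star\|_{\rm F}\le \frac{2}{\sigma_{\min}(\mA)}\,\|\mU^\top\mP^\top\mP\mB_\perp\|_{\rm F}.
\]
The quantity $\mU^\top\mP^\top\mP\mB_\perp$ is an unbiased estimate of $\mU^\top\mB_\perp=0$. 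Its second moment is at most $\frac{p}{m}\max_i\|\mU_{i,:}\|^2\,\|\mB_\perp\|_{\rm F}^2\le \frac{\mu r}{m}\|\mB_\perp\|_{\rm F}^2$. Markov's inequality then gives the bound with probability $1-\delta/2$, and this is where the $1/\sqrt{\delta}$ dependence comes from. The result is
\[
\|\tilde\mX-\mX_\star\|_{\rm F}\lesssim \frac{1}{\sigma_{\min}(\mA)}\sqrt{\frac{\mu r}{m\delta}}\,\|\mB_\perp\|_{\rm F}.
\]

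\textbf{Step 3 (relating to $\|\mX_\star\|_{\rm F}$).} This is the step I expect to be the main obstacle, because it is where the alignment constant $\gamma$ and the factors $p$ and $\kappa$ have to enter. I have $\|\mX_\star\|_{\rm F}=\|\mA^+\mB\|_{\rm F}\ge \|\mU\mU^\top\mB\|_{\rm F}/\sigma_{\max}(\mA)$. I will define $\gamma$ as the ratio $\|\mB_\perp\|_{\rm F}^2/\|\mU\mU^\top\mB\|_{\rm F}^2$, which is finite precisely because of the assumed nontrivial overlap of $\mJ_t$ with the rowspace of $\mJ_{\mathrm{pop}}$. Then $\|\mB_\perp\|_{\rm F}\le\sqrt{\gamma}\,\sigma_{\max}(\mA)\|\mX_\star\|_{\rm F}$, and the product $\sigma_{\max}/\sigma_{\min}$ gives exactly one factor of $\kappa$.

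To obtain the stated $\sqrt{p}$ rather than $\sqrt{r}$, I will crudely bound $r\le p$; alternatively the $p$ arises from the uniform-sampling rescaling, depending on how $\mu$ is normalized. I will fix the convention for $\mu$ so that $\mu r/p$ is the maximal leverage score, which makes the sampling factor $\sqrt{p\mu/m}$ appear directly. Collecting terms gives $\epsilon=\tilde{\mathcal O}\bigl(\kappa\sqrt{p\gamma\mu/(m\delta)}\bigr)$. A union bound over the two failure events completes the proof.
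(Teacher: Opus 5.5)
Your proof follows the paper's route at the top level. \Cref{lem:cov-to-ls} turns $\tr(\mV)$ and $\tr(\tilde\mV)$ into $\|\mX_\star\|_{\rm F}^2$ and $\|\tilde\mX\|_{\rm F}^2$. A relative bound $\|\tilde\mX-\mX_\star\|_{\rm F}\le\epsilon\|\mX_\star\|_{\rm F}$ is then obtained, and the difference of squared norms is at most $(2\epsilon+\epsilon^2)\|\mX_\star\|_{\rm F}^2$. The real difference is how the sketch-and-solve bound is obtained. The paper imports it as \Cref{impthm:sketch-and-solve}, combining a subspace-embedding guarantee for uniform sampling with Lemma~68 of Drineas--Mahoney. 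Your Steps~1--3 reprove it: matrix Chernoff for the embedding, the exact identity for $\mA(\tilde\mX-\mX_\star)$ in terms of the sketched residual, a second-moment/Markov bound on the unbiased cross term, and $\|\mX_\star\|_{\rm F}\ge\|\mU^\top\mB\|_{\rm F}/\sigma_{\max}(\mA)$. These steps are correct, and unpacking them is informative. The error bound itself is $O(\kappa\sqrt{p\gamma\mu/(m\delta)})$ with no logarithmic factor. The logarithm enters only through the sample-size condition $m\ge C\,p\mu\log(nd/\delta)$ that the embedding requires.

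Three things need repair.

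(i) Your final normalization of $\mu$ is backwards. If the largest leverage score is $\mu r/p$, your Step~2 factor is $(p/m)\cdot\mu r/p=\mu r/m$, and bounding $r\le p$ then loses $\sqrt{p/r}$. The paper defines $\mu$ as the largest leverage score itself, $\mu\in[nd/p,1]$. With that convention Step~2 gives $(p\mu/m)\|\mB_\perp\|_{\rm F}^2$ directly, and no crude bound is needed.

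(ii) Your $\gamma=\|\mB_\perp\|_{\rm F}^2/\|\mU\mU^\top\mB\|_{\rm F}^2$ is the reciprocal of the appendix's hypothesis $\|\mJ_t\mP\|_{\rm F}^2\ge\gamma\|\mJ_t(\mI-\mP)\|_{\rm F}^2$. Here $\mP$ is the projection onto the rowspace of $\mJ_{\mathrm{pop}}$, not your sampling matrix. Say this explicitly: your orientation is the one under which $\sqrt\gamma$ in the numerator is correct. Under the appendix's literal hypothesis your argument yields $\sqrt{1/\gamma}$. Read literally, the appendix version would force zero error as $\gamma\to0$, because its hypothesis is preserved when $\gamma$ shrinks.

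(iii) The claim that the bound is trivial once $\epsilon\ge1$ is false. If a high-leverage coordinate is not sampled, the restricted $\tilde\mA$ can be nearly rank deficient and $\|\tilde\mX\|_{\rm F}$ arbitrarily large, so $\tr(\tilde\mV)$ is not controlled by $\tr(\mV)$. Also, for $\epsilon>1$ your quadratic bound gives $\epsilon^2$, not $\epsilon$. As the paper does (``for small enough $\epsilon$''), make $\epsilon\le1$ a hypothesis. State the embedding condition $m\ge C\,p\mu\log(nd/\delta)$ separately, since it does not follow from $\epsilon\le1$ when $\gamma$ is small.
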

\end{tcolorbox}

As $m$ increases, the approximation error decays to zero.
By a standard union bound, this guarantee can be extended to hold uniformly over the $T$ reverse
steps used in \Cref{alg:Algorithm1}.

\section{Experimental Results}
\label{sec:exp}

In this section, we evaluate the effectiveness of our proposed method (\Cref{alg:Algorithm1}) for quantifying epistemic uncertainty in samples generated by diffusion models.
We consider three synthetic time-series benchmarks designed to probe complementary aspects of epistemic uncertainty:
(i) bimodal sinusoids, (ii) chirped sinusoids, and (iii) damped sinusoids.
These tasks test multi-modality, ambiguity, extrapolation, and temporal decay.
We compare our approach against established baselines and evaluate both qualitative behavior and quantitative performance using standardized metrics for sample quality and confidence-based filtering.

Detailed descriptions of the datasets, diffusion models, and parameterizations are provided in \Cref{app:datasets} and \Cref{app:hyper}.

\textbf{Baselines and evaluation protocol.}
We compare against two baseline methods:
(i) BayesDiff predictive variance~\citep{kou2023bayesdiff}, and
(ii) last-layer Laplace (LLLA) rollouts~\citep{daxberger2021laplace}.
All methods are evaluated under matched training and sampling protocols. Each method assigns a scalar uncertainty score $u$ to every generated sample. For a given method, we form a filtered subset by retaining the lowest-uncertainty $p\%$ of samples, with $p=50$ for the sine and grid datasets and $p=25$ for the chirp dataset (the latter reduces visual clutter given the longer sequence length $L=80$).
Unfiltered generations, i.e., using all samples, serve as the baseline.

\textbf{Quantitative evaluation.}
To assess the effect of uncertainty-based filtering, we train a discriminator $D_\phi$ to distinguish training samples from generated samples, labeling training data as $1$ and generated data as $0$.
Its accuracy is
$
\mathrm{Acc} \;=\; \frac{1}{N}\sum_{i=1}^N \mathbbm{1}\{D_\phi(x_i)=y_i\},
$
where $y_i \in \{0,1\}$ indicates if $x_i$ is drawn from the training set.
To quantify the impact of filtering, we report \emph{gap closure}, defined as
\[
\mathrm{Gap\text{-}Closure}
\;=\;
\frac{\bigl|0.5 - \mathrm{Acc}_f\bigr| - \bigl|0.5 - \mathrm{Acc}_{u\!f}\bigr|}
     {\bigl|0.5 - \mathrm{Acc}_{u\!f}\bigr|},
\]
where $\mathrm{Acc}_f$ denotes the discriminator accuracy on the filtered samples and
$\mathrm{Acc}_{u\!f}$ the accuracy on the unfiltered baseline.
Subtracting from \(0.5\) measures by how much we are outperforming random chance,
with smaller values indicating less distinguishability between generated and training samples.
Gap closure measures how much filtering reduces the discriminator’s ability to distinguish generated samples \textit{relative to the unfiltered baseline}.
A gap-closure of $100\%$ is good, indicating that the discriminator can no longer distinguish filtered samples from training data.
Values between $0\%$ and $100\%$ indicate partial improvement, while negative values indicate that filtering actually removes high quality samples and degrades realism.
We also compute the Receiver Operating Characteristic Area Under the Curve (ROC-AUC),
defined as
$
\mathrm{AUC} \;=\; \Pr\!\bigl(s(x^+) > s(x^-)\bigr),
$
where $s(\cdot)$ denotes the discriminator score, $x^+$ is a training sample, and $x^-$ is a generated sample.
An AUC of $0.5$ corresponds to chance-level discrimination (good), while larger values indicate greater separability (bad).
Statistical significance is assessed using bootstrap resampling. For each method, we estimate the distribution of gap-closure values under resampling and report $p$-values, indicating whether improvements are consistent.

\begin{table}[!b]
\caption{Comparison of uncertainty-based filtering methods across two datasets.
Higher gap-closure and lower ROC-AUC indicate improved alignment between filtered generations and the training data.
Best values are highlighted in bold.}

\label{tab:gap_closure_results}
\label{Table 1}
\centering
\scriptsize
\setlength{\tabcolsep}{4pt}
\renewcommand{\arraystretch}{1.2}
\resizebox{\columnwidth}{!}{%
\begin{tabular}{l l c c c}
\toprule
Dataset & Method & \makecell{Gap-Closure (\%)} & \makecell{ROC-AUC\\($\downarrow$)} & \makecell{$p$ (bootstrap)} \\
\midrule
\multirow{3}{*}{Sines}
& BayesDiff     & +13.3654   & 0.6153 & 0.0001 \\
& LLLA         & +47.1873   & 0.5814 & 0.0201 \\
& FLARE (ours)  & \textbf{+93.0814}   & \textbf{0.5003} & \textbf{0.0012} \\
\midrule
\multirow{3}{*}{Chirp}
& BayesDiff     & +41.7337   & 0.6616 & 0.0030 \\
& LLLA         & +59.0583  & 0.5891 & 0.0116 \\
& FLARE (ours)  & \textbf{+74.3137}  & \textbf{0.5345} & \textbf{0.0002} \\
\midrule
\multirow{3}{*}{Damped Sines}
& BayesDiff     & +10.3984   & 0.6861 & 0.0001 \\
& LLLA         & $-18.7540$  & 0.7754 & 0.0066 \\
& FLARE (ours)  & \textbf{+85.0}  & \textbf{0.5085} & \textbf{0.0002} \\
\bottomrule
\end{tabular}%
}
\end{table}

\textbf{Results.}
As shown in \Cref{tab:gap_closure_results}, BayesDiff predictive variance yields modest improvements, with $+13.4\%$ gap closure on the sine dataset and $+41.7\%$ on the chirp dataset, both with statistical significance.
The last-layer Laplace approximation (LLLA) often underperforms, exhibiting negative or marginal gap closures and relatively high ROC-AUC values.
In contrast, our method consistently achieves the strongest gains, with $+93.0\%$ gap closure on the sine dataset, $+74.3\%$ on the chirp dataset, and $+76.2\%$ on the damped sine dataset, alongside the lowest ROC-AUC values.
These results indicate that filtering with FLARE substantially reduces the discriminator’s ability to distinguish generated samples from training data across all three settings. All improvements are statistically significant under bootstrap testing.
Overall, the quantitative results align with qualitative inspection, indicating that FLARE provides a more informative epistemic uncertainty signal than existing baselines. \emph{Notably, the largest gains occur in settings that require extrapolation or mode selection, where separating epistemic from aleatoric uncertainty is most critical.}

\begin{figure}[!t]
    \centering

    \vspace{+0.4cm}
    \begin{subfigure}{\linewidth}
        \centering
        \begin{overpic}[width=0.97\linewidth]
            {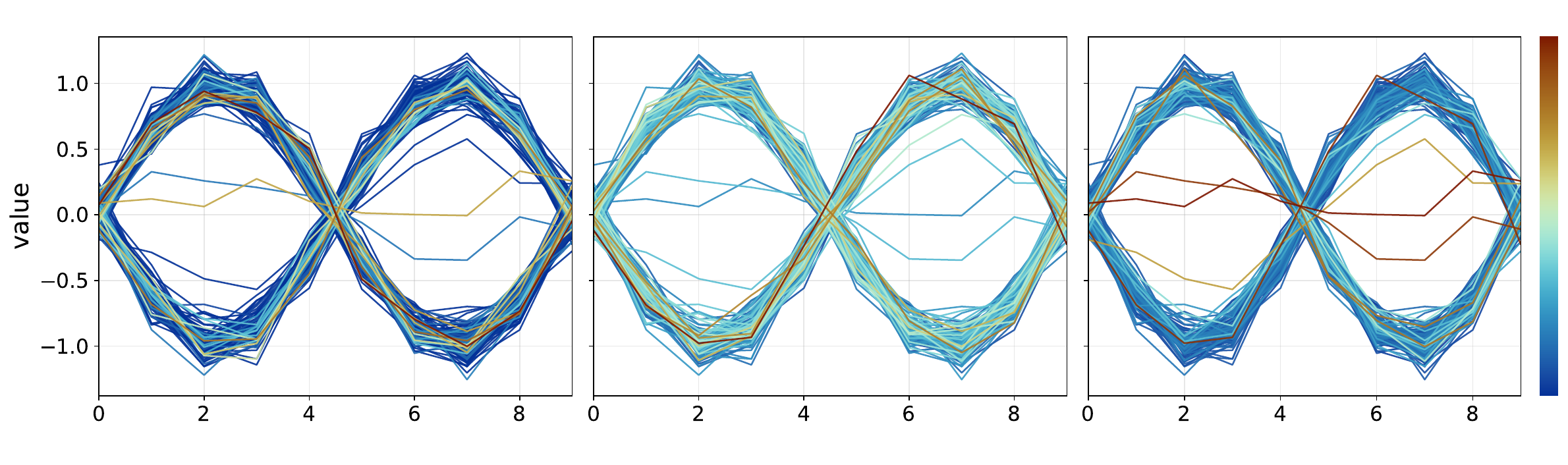}
            \put(14,30){{\scalebox{0.75}{BayesDiff}}}
            \put(49,30){{\scalebox{0.75}{LLLA}}}
            \put(73,30){{\scalebox{0.75}{FLARE (ours)}}}
            \put(100,1){\rotatebox{90}{\scalebox{0.5}{certain}}}
            \put(100,21){\rotatebox{90}{\scalebox{0.5}{ uncertain}}}
        \end{overpic}
        
        \begin{overpic}[width=0.97\linewidth]
            {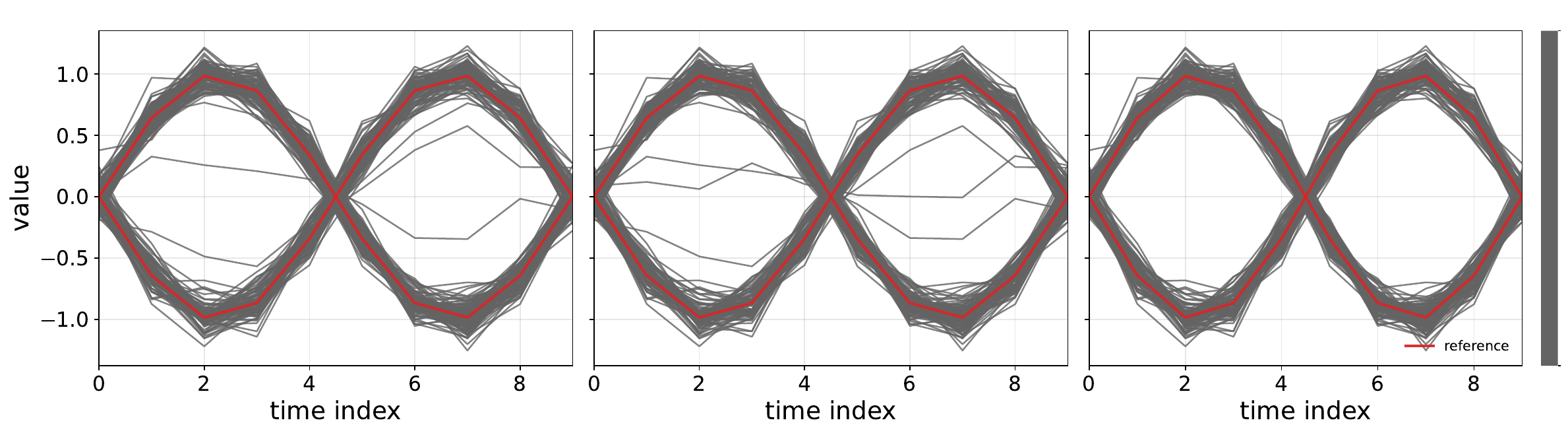}
        \put(98,3){\color{white}\rule{1em}{5.6em}} 
        \put(98,8){\rotatebox{90}{\scalebox{0.5}{filtered samples}}}
        \end{overpic}
        \subcaption{Sinusoidal time series dataset.}
        \label{fig:sines}
    \end{subfigure}

    \vspace{0.7cm} 
    \begin{subfigure}{\linewidth}
        \centering
        \begin{overpic}[width=0.97\linewidth]
            {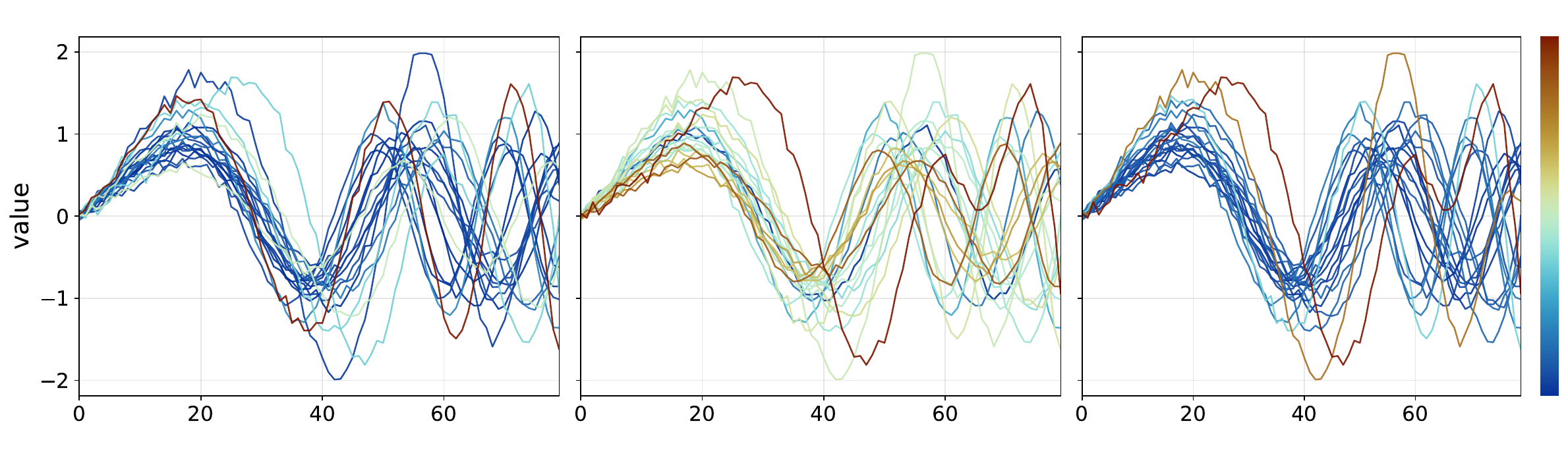}
            \put(14,30){{\scalebox{0.75}{BayesDiff}}}
            \put(49,30){{\scalebox{0.75}{LLLA}}}
            \put(73,30){{\scalebox{0.75}{FLARE (ours)}}}
            \put(100,1){\rotatebox{90}{\scalebox{0.5}{certain}}}
            \put(100,21){\rotatebox{90}{\scalebox{0.5}{ uncertain}}}
        \end{overpic}
        \begin{overpic}[width=0.97\linewidth]
            {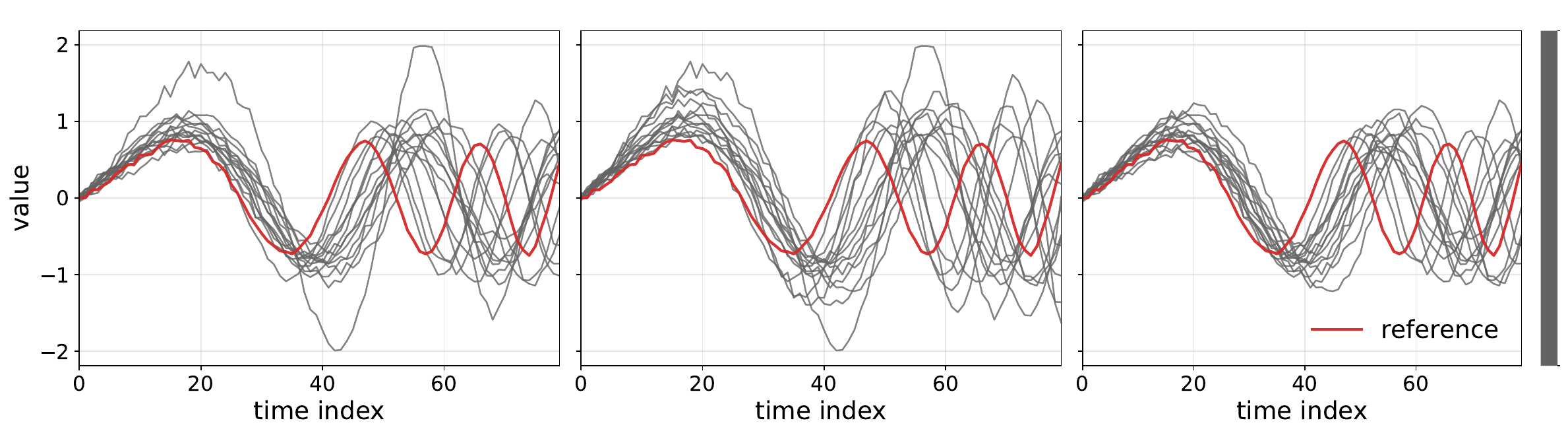}
            \put(98,3){\color{white}\rule{1em}{5.6em}} 
            \put(98,8){\rotatebox{90}{\scalebox{0.5}{filtered samples}}}
        \end{overpic}
        \subcaption{Chirp time series dataset.}
        \label{fig:chirp}
    \end{subfigure}

    \vspace{0.7cm} 
    \begin{subfigure}{\linewidth}
        \centering
        \begin{overpic}[width=0.97\linewidth]
        {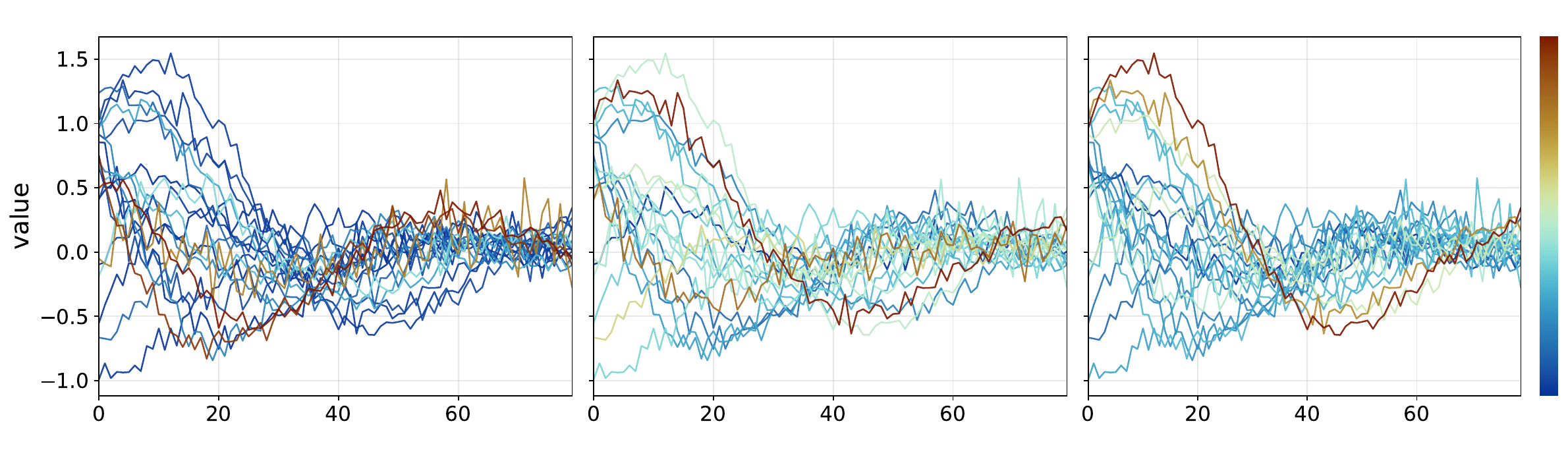}
            \put(14,30){{\scalebox{0.75}{BayesDiff}}}
            \put(49,30){{\scalebox{0.75}{LLLA}}}
            \put(73,30){{\scalebox{0.75}{FLARE (ours)}}}
            \put(100,1){\rotatebox{90}{\scalebox{0.5}{certain}}}
            \put(100,21){\rotatebox{90}{\scalebox{0.5}{ uncertain}}}
         \end{overpic}
        \begin{overpic}[width=0.97\linewidth]
        {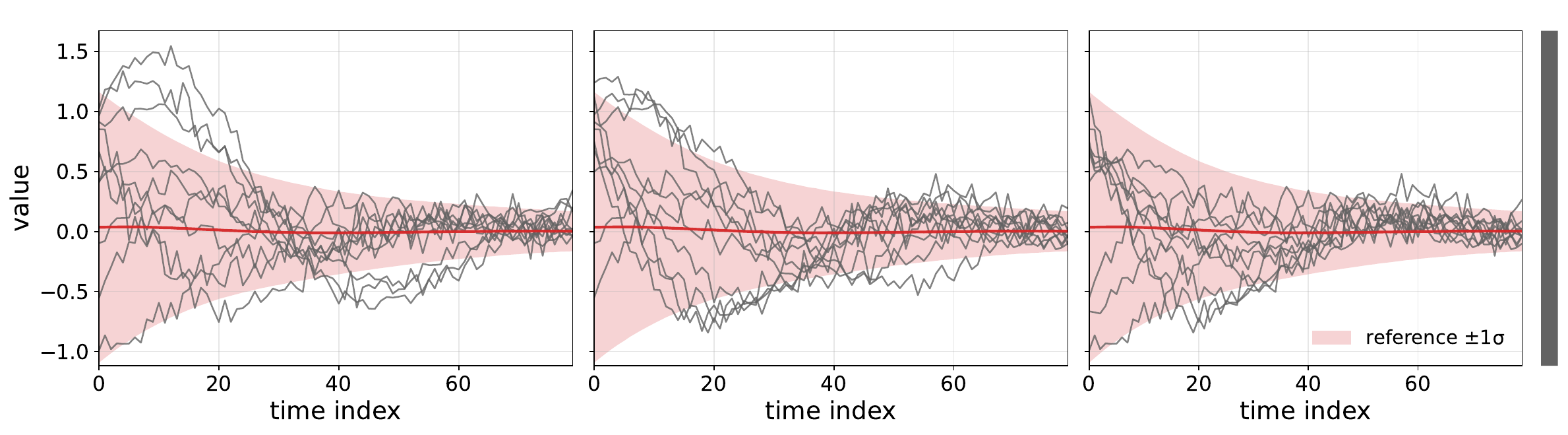}
        \put(98,3){\color{white}\rule{1em}{5.6em}} 
        \put(98,8){\rotatebox{90}{\scalebox{0.5}{filtered samples}}}
         \end{overpic}
        \subcaption{Damped Sinusoidal time series dataset.}
        \label{fig:damped}
    \end{subfigure}
    
\caption{Generated time-series samples before and after uncertainty-based filtering.
Panel~(a) shows the sinusoidal dataset and panel~(b) the chirp dataset.
\textbf{Top:} generated trajectories, colored by epistemic uncertainty (blue = low, red = high).
\textbf{Bottom:} trajectories retained after filtering by uncertainty.
Filtering removes implausible, off-manifold samples while preserving diverse, on-distribution trajectories.}

    \label{fig:timeseries}
\end{figure}

\textbf{Qualitative evaluation.}
Uncertainty-based filtering also improves the visual fidelity of generated samples.
In the 2D grid dataset (\Cref{fig:grid}), filtered samples concentrate tightly around the mixture components, with reduced spillover into low-density regions.
On the sinusoidal time-series benchmark (\Cref{fig:sines}), low-uncertainty subsets cluster around the two sinusoidal modes, while samples in ambiguous regions between modes are pruned.
For both the chirped and damped sinusoid datasets (\Cref{fig:chirp}, \Cref{fig:damped}), filtering suppresses spurious oscillations and frequency drift, yielding smoother trajectories that better match the training distribution.
\emph{These visual patterns mirror the quantitative findings and provide evidence that FLARE identifies uncertainty in regions where the model must interpolate, extrapolate, or resolve ambiguity.}

\section{Conclusion and Limitations}

Quantifying epistemic uncertainty in diffusion models is challenging because uncertainty arises from multiple sources and accumulates across many denoising steps.
Existing approaches either conflate epistemic uncertainty with diffusion noise or restrict uncertainty estimation to a small subset of parameters, thereby discarding important structure.
In this work, we argue that reliable uncertainty quantification in diffusion models requires explicitly tracking how parameter uncertainty propagates through the reverse dynamics.

To this end, we introduced a Fisher–Laplace formulation that projects parameter uncertainty into data space via the Jacobian of the denoiser and then accumulates this uncertainty along a realized reverse trajectory.
Building on this formulation, we proposed FLARE, a randomized subnetwork estimator that makes Fisher–Laplace uncertainty estimation scalable to modern diffusion models.
FLARE preserves network-wide sensitivity while substantially reducing computational cost, and we provide both theoretical guarantees and empirical evidence that it yields more informative epistemic uncertainty estimates than last-layer or predictive-variance-based alternatives.

Empirically, we demonstrated that FLARE produces structured, sample-level uncertainty signals that align with intuitive notions of confidence, particularly in regions of interpolation, extrapolation, and low data density.
These uncertainty estimates enable reliable filtering of generated samples, improving sample quality without retraining, ensembling, or repeated inference-time sampling.
More broadly, our results suggest that separating epistemic and aleatoric effects is essential for interpreting uncertainty in diffusion-based generative models.

\textbf{Limitations.} We evaluated FLARE primarily on synthetic time-series benchmarks designed to probe specific epistemic failure modes; extending the approach to higher-dimensional domains such as multivariate time series or image generation remains an important direction for future work.
In addition, while randomized subnetwork sampling significantly reduces computational overhead, approximating curvature information in very large diffusion models is still demanding.
Future work could explore structured or adaptive subsampling strategies, low-rank curvature approximations, or tighter integrations with efficient second-order optimization methods to further scale FLARE.

%% file: appendix.tex
\clearpage
\appendix

\onecolumn

\section{Proof of \texorpdfstring{\Cref{prop:one-step-fisher}}{Proposition \ref*{prop:one-step-fisher}}}
\label{app:prop-proof}


In this section, we prove \Cref{prop:one-step-fisher}.
We start by restating the setup and formalizing notation further.
Let $d$ be the data dimension and $p$ the number of parameters.
We adopt the $\varepsilon$-prediction reverse update
\begin{equation}
\label{eq:reverse-update}
\mathbf{x}_{t-1}
=
a_t\,\mathbf{x}_t
-
b_t\,\varepsilon_{\boldsymbol{\theta}}(\mathbf{x}_t,t)
+
\boldsymbol{\eta}_t,
\qquad
\boldsymbol{\eta}_t\sim\mathcal N(\mathbf{0},\tilde\beta_t\mathbf{I}_d).
\end{equation}
where $a_t,b_t,\tilde\sigma_t>0$ are schedule constants.
We take $\hat{\boldsymbol{\theta}}$ to be the maximum a posteriori (MAP) estimate of the denoiser’s parameters $\boldsymbol{\theta}$ given training data $D$ and prior $p(\boldsymbol{\theta})$:
\[
\hat{\boldsymbol{\theta}}
~\coloneqq~
\arg\max_{\boldsymbol{\theta}} \log p(\boldsymbol{\theta}\mid D)
~=~
\arg\max_{\boldsymbol{\theta}}
\left\{ \log p(D\mid \boldsymbol{\theta}) + \log p(\boldsymbol{\theta}) \right\}.
\]

We define the parameter--Jacobian of the denoiser at the current sampler state as
\[
\mathbf{J}_t
~\coloneqq~
\nabla_{\boldsymbol{\theta}}\,\varepsilon_{\boldsymbol{\theta}}(\mathbf{x}_t,t)\big|_{\hat{\boldsymbol{\theta}}}
\in\mathbb{R}^{d\times p},
\]
which depends on $t$ because it is evaluated at the realized pair $(\mathbf{x}_t,t)$ along the reverse trajectory.

In order to prove \cref{prop:one-step-fisher}, we will make a mild assumption about the smoothness of our model.
\begin{assumption}
    \label{assumption:assumptions}
            For all time steps \(t\), we assume the denoiser \(\eps_{\vtheta}(\cdot,t)\) is continuously differentiable in a neighborhood around the MAP estimate \(\hat\vtheta\) that includes \(\vtheta\).
\end{assumption}

We can now prove the main result:

\begin{reptheorem}{prop:one-step-fisher}
Under a local Gaussian posterior \(\vtheta \sim \mathcal N(\hat\vtheta,\mSigma_{\vtheta})\), independence of the schedule noise \(\veta_t\) from \(\vtheta\), and \cref{assumption:assumptions}, the conditional covariance at reverse step \(t\) can be decomposed as
\[
\mathrm{Cov}(\mathbf{x}_{t-1}\mid \mathbf{x}_t,t)
=
\tilde\beta_t\,\mathbf{I}_d
\;+\;
b_t^2\,\mathbf{J}_t\,\boldsymbol{\Sigma}_\theta\,\mathbf{J}_t^\top
\;+\;
o(\|\boldsymbol{\Sigma}_\theta\|).
\]
Consequently, the epistemic contribution to the covariance is
\[
\boldsymbol{\Sigma}^{\mathrm{ep}}_{t-1\mid t}
~\coloneqq~
\mathrm{Cov}_{\boldsymbol{\theta}}\!\big(\,\mathbb{E}[\mathbf{x}_{t-1}\mid \mathbf{x}_t,t,\boldsymbol{\theta}]\,\big)
~=~
b_t^2\,\mathbf{J}_t\,\boldsymbol{\Sigma}_\theta\,\mathbf{J}_t^\top
\;+\;
o(\|\boldsymbol{\Sigma}_\theta\|).
\]
\end{reptheorem}

\begin{proof}
Recall \cref{eq:prelim-ltv}, which uses the law of total variance to say
\[
\mathrm{Cov}(\mathbf{x}_{t-1} \mid \mathbf{x}_t, t)
=
\mathbb{E}_{\boldsymbol{\theta}}\!\big[\, \mathrm{Cov}(\mathbf{x}_{t-1} \mid \mathbf{x}_t, t, \boldsymbol{\theta}) \,\big]
\;+\;
\mathrm{Cov}_{\boldsymbol{\theta}}\!\big( \, \mathbb{E}[\mathbf{x}_{t-1} \mid \mathbf{x}_t, t, \boldsymbol{\theta}] \, \big).
\]
We will analyze these two terms separately.


Recall the reverse update rule under \(\eps\)-prediction, stated in \cref{eq:reverse-update}.
Conditioned on \((\vx_t,t)\) and a fixed parameter vector \(\vtheta\), the only randomness in the update rule originates from the injection noise \(\veta_t\), and so
\begin{equation}\label{eq:aleatoric}
    \E_{\vtheta}[\Cov(\vx_{t-1} ~|~ \vx_t, t, \vtheta)
    = \E_{\vtheta}[\Cov(\veta_t)]
    = \E_{\vtheta}[\tilde\beta_t \mI_d]
    = \tilde\beta_t \mI_d.
\end{equation}
That is, the aleatoric portion of the covariance of \(\vx_{t-1}\) given \((\vx_t,t)\) is exactly \(\tilde\beta_t \mI_d\).

We now turn to the second term from the law of total variance.
From the reverse update rule \cref{eq:reverse-update}, we have
\(
    \E[\vx_{t-1} ~|~ \vx_t,t,\vtheta]
    = a_t \vx_t - b_t \eps_{\hat\vtheta}(\vx_t,t).
\)
By \cref{assumption:assumptions}, we linearly approximate \(\eps_{\vtheta}\) around the MAP estimate \(\hat\vtheta\):
\begin{equation}\label{eq:approx}
    \eps_{\vtheta}(\vx_t,t)
    = \eps_{\hat\vtheta}(\vx_t,t) + \mJ_t (\vtheta - \hat\vtheta) + \vr_t(\theta)
\end{equation}
where \(\mJ_t = \nabla_{\vtheta}\eps_{\vtheta}(\vx_t,t)|_{\hat\theta}\) as defined earlier and the remainder function satisfies \(\|\vr_t(\vtheta)\| = O(\|\vtheta - \hat\vtheta\|^2)\).
Substituting this into the conditional mean, we can define
\begin{equation}
\label{eq:ct-mu}
\mathbf{c}_t := a_t \mathbf{x}_t - b_t\,\varepsilon_{\hat{\boldsymbol{\theta}}}(\mathbf{x}_t, t),
\qquad
\text{so that}\quad
\boldsymbol{\mu}_{\boldsymbol{\theta}}
:= \mathbb{E}[\mathbf{x}_{t-1} \mid \mathbf{x}_t, t, \boldsymbol{\theta}]
= \mathbf{c}_t - b_t \mathbf{J}_t (\boldsymbol{\theta} - \hat{\boldsymbol{\theta}}) - b_t \mathbf{r}_t(\boldsymbol{\theta}).
\end{equation}
We now examine the covariance of all three terms in \(\mu_{\vtheta}\) with respect to \(\vtheta\sim\mathcal N (\hat\vtheta ,\mSigma_{\vtheta})\).
Since \(\vc_t\) is independent of \(\vtheta\), we know \(\Cov_{\vtheta}(\vc_t) = \bm 0\).
We also directly compute \(\Cov(b_t \mJ_t (\vtheta - \hat\vtheta)) = b_t^2\mJ_t\mSigma_{\vtheta}\mJ_t^\top\).

To tackle the third term, recall that the norm of \(\vr_t(\vtheta)\) is quadratic in \(\|\vtheta-\hat\vtheta\|\).
So, standard delta-method bounds imply that \(\|\Cov_{\vtheta}(\vr_t(\vtheta))\| = o(\|\mSigma_{\vtheta}\|_2)\) and that the cross-covariance between \(\vr_t(\vtheta)\) and \(b_t^2\mJ_t(\vtheta-\hat\vtheta)\) is \(o(\|\mSigma_{\vtheta}\|_2)\).
Therefore,
\begin{equation}\label{eq:epistemic-term}
\mathrm{Cov}_{\boldsymbol{\theta}}\!\big( \mathbb{E}[\mathbf{x}_{t-1} \mid \mathbf{x}_t, t, \boldsymbol{\theta}] \big)
=
b_t^{2}\, \mathbf{J}_t \boldsymbol{\Sigma}_\theta \mathbf{J}_t^\top \;+\; o(\|\boldsymbol{\Sigma}_\theta\|_{2}),
\end{equation}
which is the epistemic contribution.
Combining \cref{eq:aleatoric,eq:epistemic-term} yields
\[
\mathrm{Cov}(\mathbf{x}_{t-1} \mid \mathbf{x}_t, t)
=
\tilde{\beta}_t \mathbf{I}_d \;+\; b_t^{2}\, \mathbf{J}_t \boldsymbol{\Sigma}_\theta \mathbf{J}_t^\top \;+\; o(\|\boldsymbol{\Sigma}_\theta\|_{2}).
\]
We see that, the predictive covariance \(\Cov(\vx_{t-1} ~|~ \vx_t,t)\) decomposes into an aleatoric term from the injected noise and an epistemic term obtained by projecting the parameter posterior through the parameter--Jacobian at the current state.

\end{proof}

\paragraph{Explicit DDPM coefficients.}
For completeness, one may take \citep{ho2020ddpm,nichol2021improved}
\[
a_t=\frac{1}{\sqrt{\alpha_t}}\!\left(1-\frac{1-\alpha_t}{\sqrt{1-\bar\alpha_t}}\right),\quad
b_t=\frac{\sqrt{1-\alpha_t}}{\sqrt{\alpha_t}},\quad
\tilde\beta_t=\frac{1-\bar\alpha_{t-1}}{1-\bar\alpha_t}\,\beta_t,
\]
so that $m_{\vtheta}(\vx_t,t)=a_t \vx_t-b_t\varepsilon_{\vtheta}(\vx_t,t)$ and $\operatorname{Var}(\veta_t)=\tilde\beta_t \mI$.

\paragraph{Remarks.}
(i) The DDPM reverse update yields an \emph{exact} structural split
$\operatorname{Cov}(x_{t-1}\mid x_t,t)=\tilde\beta_t I+\operatorname{Cov}_\theta(m_\theta)$ before approximation; the ``aleatoric vs.\ epistemic'' terminology follows \citet{kendall2017uncertainties}. 
(ii) The epistemic term is a first-order pushforward of the parameter posterior:
$b_t^2 J_t\Sigma_\theta J_t^\top$, requiring only first-order network derivatives; the Laplace link between $\Sigma_\theta$ and (approximate) Fisher curvature is classical \citep{mackay1992practical,daxberger2021laplace}. 
(iii) For mean- or $x_0$-prediction, replace $b_t J_t$ by the Jacobian of the corresponding one-step mean $m_\theta$; the proof is unchanged. 
(iv) In the broader asymptotic/theoretical context for diffusion estimators and sampling, see \citet{chewi2025ddpm,chen2022sampling} for likelihood identities, efficiency guarantees, and sampling convergence results that justify using Fisher-type curvature as a proxy for epistemic uncertainty in DDPMs.
(v) The argument extends to DDIM~\citep{song2020denoising} and flow–matching/ODE samplers~\citep{lipman2024flow}: their one–step updates take the form $\mathbf{x}_{t-1}=m_{\boldsymbol{\theta}}(\mathbf{x}_t,t)$ (i.e., $\tilde\sigma_t=0$), so the aleatoric term $\tilde\beta_t\mathbf{I}_d$ disappears and the epistemic contribution reduces to the first–order pushforward through the mean map
\[
\mathrm{Cov}_{\boldsymbol{\theta}}\!\big(\mathbb{E}[\mathbf{x}_{t-1}\mid \mathbf{x}_t,t,\boldsymbol{\theta}]\big)
~=~ \big(\nabla_{\boldsymbol{\theta}} m_{\boldsymbol{\theta}}(\mathbf{x}_t,t)\big)\,
\boldsymbol{\Sigma}_\theta\,
\big(\nabla_{\boldsymbol{\theta}} m_{\boldsymbol{\theta}}(\mathbf{x}_t,t)\big)^\top
\;+\; o(\|\boldsymbol{\Sigma}_\theta\|_{2}).
\]

\section{Proofs for \texorpdfstring{\cref{sec:theory}}{Section \ref*{sec:theory}}}
\label{app:theory}

In this appendix, we prove \cref{lem:cov-to-ls}, which relates the covariance matrix construction to overdetermined least squares problems, and \cref{thm:random-subnetwork}, which uses this least squares perspective to prove that the random subnetwork approach

\begin{replemma}{lem:cov-to-ls}
    Let \(\mA\in\R^{p \times k}, \mB \in \R^{p \times \ell}\) be full rank with \(k,\ell < p\).
    Let \(\mX_\star \in \R^{k \times \ell}\) be the solution to the least squares problem \(\min_{\mX} \|\mA\mX-\mB\|_{\rm F}\).
    Then we have \(\mB^\top(\mA\mA^\top)^+\mB = \mX_\star^\top\mX_\star\).
\end{replemma}
\begin{proof}
    This follows directly from \((\mA\mA^\top)^+ = (\mA^+)^\top\mA^+\) and the fact that \(\mX_\star = \mA^+\mB\), since we can expand
    \[
        \mB^\top(\mA\mA^\top)^+\mB
        = (\mA^+\mB)^\top(\mA^+\mB)
        = \mX_\star^\top\mX_\star.
    \]
\end{proof}

We next turn our attention to \cref{thm:random-subnetwork}.
Before proving this result, we formally define the terms involved.
Let \(\mA \in \R^{p \times k}\) be full-rank matrix with \(p \geq k\) with economic SVD \(\mA = \mU\mSigma_\theta\mV^\top\) and singular values \(\sigma_1 \geq \ldots \geq \sigma_k\).
The \emph{condition number} of \(\mA\) is the ratio \(\kappa(\mA) \coloneq \frac{\sigma_1}{\sigma_k}\).
Since we expanded the economic SVD of \(\mA\), we have \(\mU \in \R^{p \times k}\) with orthonormal columns but not orthonormal rows.
The \emph{coherence} of \(\mA\) is the maximum squared row norm of \(\mU\).
That is, letting \(\vu_i^\top\) be the \(i^{th}\) row of \(\mU\), we have
\(
    \mu(\mA) \coloneq \max_{i \in \{1,\ldots,p\}} \|\vu_i\|_2^2 \in [\tfrac kp, 1  ]
\).

The result \cref{thm:random-subnetwork} follows from the well-understood facts about the sketch-and-solve algorithm from randomized numerical linear algebra \citep{woodruff2014sketching,Drineas17,meyer23}.
\begin{importedtheorem}
    \label{impthm:sketch-and-solve}
    Let \(\mA \in \R^{p \times k}\) be full-rank with \(p \geq k\) and let \(\mB \in \R^{p \times \ell}\).
    Assume that \(\|\mP\mB\|_{\rm F}^2 \geq \gamma \|(\mI-\mP)\mB\|_{\rm F}^2\) for some \(\gamma > 0\), where \(\mP \in \R^{p \times p}\) is the orthogonal projection onto the range of \(\mA\).
    Sample a uniformly random set \(I \subseteq \{1,\ldots,p\}\) with \(|I| = m\) for some \(m\) with \(k \leq m \leq p\).
    Let \(\tilde\mA \in \R^{m \times k}\) and \(\tilde\mB \in \R^{m \times \ell}\) be the restrictions of \(\mA\) and \(\mB\) to the rows indexed by \(I\).
    Define \(\mX_\star\) to be the solution of the full least squares problem
    \[
        \mX_\star \coloneq \argmin_{\mX\in\R^{k \times \ell}} \|\mA\mX-\mB\|_{\rm F}^2,
    \]
    and define \(\tilde\mX\) to be the solution of the subsampled problem
    \[
        \tilde\mX \coloneq \argmin_{\mX\in\R^{k \times \ell}} \|\tilde\mA\mX-\tilde\mB\|_{\rm F}^2.
    \]
    Then, with probability at least \(1-\delta\) we have that
    \[
        \|\mX_\star - \tilde\mX\|_{\rm F} \leq \tilde {\mathcal O}\left( \sqrt{\frac{p \gamma \mu(\mA)}{m \delta}} \kappa(\mA) \right) \|\mX_\star\|_{\rm F}.
    \]
\end{importedtheorem}
\begin{remark}
    This result does not appear exactly as stated above in the cited works.
    To recover this theorem statement, we first use the Corollary 1 from \citep{meyer23} to show that the uniformly random subsampling procedure produces an \emph{OSE Guarantee}.
    We then use the OSE Guarantee to activate Lemma 68 from \citep{Drineas17}.
    We further note the Lemma 68 is only stated for the case where \(\ell=1\).
    However, the proof of Lemma 68 goes through without change if we allow \(\ell>1\) and use the Frobenius norm in place of the vector \(\ell_2\) norm.
    These analyses are all standard, and have been presented in some form or another in many other works.
\end{remark}

From this result, we can now recover our main theorem statement.

\begin{reptheorem}{thm:random-subnetwork}
    Let \(\mJ_{\rm pop}\in\R^{nd \times p}\) and \(\mJ_t \in \R^{d \times p}\).
    Let \(\mu \in [\frac {nd}p, 1]\) and \(\kappa \geq 1\) be the \emph{coherence} and \emph{condition number} of \(\mJ_{\rm pop}^\top\), respectively.
    Let \(\mP\in\R^{p \times p}\) be the orthogonal projection onto the rowspace of \(\mJ_{\rm pop}\), and assume that \(\|\mJ_t\mP\|_{\rm F}^2\geq\gamma\|\mJ_t(\mI-\mP)\|_{\rm F}^2\) for some \(\gamma > 0\).
    Then, then random subnetwork approximation with subnetwork size \(m\) has
    \[
        |\tr(\mV - \tilde\mV)| \leq \tilde{\mathcal O}\bigg(\sqrt{\frac{p\gamma\mu}{m\delta}} \kappa\bigg)\tr(\mV)
    \]
    with probability at least \(1-\delta\).
\end{reptheorem}
\begin{proof}
    By \cref{lem:cov-to-ls}, we know that
    \begin{alignat*}{7}
        \mV &= \mX_\star^\top\mX_\star \quad&\text{where}\quad &&\mX_\star = \argmin_{\mX}\|\mJ_{\rm pop}^\top\mX - \mJ_t^\top\|_{\rm F}^2&\text{, and} \\
        \tilde\mV &= \tilde\mX^\top\tilde\mX \quad&\text{where}\quad &&\tilde\mX = \argmin_{\mX}\|\tilde\mJ_{\rm pop}^\top\mX - \tilde\mJ_t^\top\|_{\rm F}^2.
\end{alignat*}
    By \Cref{impthm:sketch-and-solve}, we know that \(\|\mX_\star - \tilde\mX\|_{\rm F} \leq \varepsilon \|\mX_\star\|_{\rm F}\) with probability at least \(1-\delta\), where \(\varepsilon = \tilde {\mathcal O}(\sqrt{\frac{p\gamma\mu}{m\delta}} \kappa)\).
    We thereby infer that \(|\|\mX_\star\|_{\rm F}^2 - \|\tilde\mX\|_{\rm F}^2| \leq 2\varepsilon \|\mX_\star\|_{\rm F}^2\) for small enough \(\varepsilon\).
    Recalling that \(\tr(\mA^\top\mA) = \|\mA\|_{\rm F}^2\), we observe that \(\tr(\mV) = \|\mX_\star\|_{\rm F}^2\) and therefore
    \begin{align*}
        |\tr(\mV - \tilde\mV)|
        = \big| \|\mX_\star\|_{\rm F}^2 - \|\tilde\mX\|_{\rm F}^2 \big|
        \leq 2\eps \|\mX_\star\|_{\rm F}^2
        = \tilde{\mathcal O}\left(\sqrt{\frac{p\gamma\mu}{m\delta}} \kappa\right) \tr(\mV)
    \end{align*}
\end{proof}

\section{Dataset Description}\label{app:datasets}

We used four simple synthetic data sets to test different aspects of uncertainty behavior. Each dataset is designed to expose a particular modeling challenge while remaining interpretable and visually intuitive. The collection includes both spatial and temporal domains, covering multimodal distributions as well as dynamic signals with varying frequency, phase, and decay. Together, these benchmarks provide controlled settings in which to study how uncertainty captures ambiguity, extrapolation, and signal degradation. They serve as minimal yet diverse examples that highlight where models should express high confidence and where epistemic uncertainty should naturally arise.

\begin{enumerate}
    \item \textbf{Grid (2D)}: a $3{\times}3$ modal Gaussian mixture with centers on $\{-1,0,1\}^2$ and per-mode covariance $0.05^2 I_2$, yielding high-support cells separated by low-density corridors. This setup highlights the need for spatially aware uncertainty, especially between modes where epistemic error should spike.

\item \textbf{Bimodal Sinusoidal time series} (length $L{=}10$): each sequence is drawn from a balanced mixture of $x_t=\pm \sin(2\pi \tau_t)+\varepsilon_t$, where $\tau_t$ is a uniform grid on $[0,1]$ and $\varepsilon_t\sim\mathcal{N}(0,\,0.1^2)$. This creates two separated modes with ambiguous regions in between.

\item \textbf{Chirp time series} (length $L{=}80$): sequences follow $x_t=A\sin\!\big(2\pi(f_0 \tau_t+\tfrac{1}{2}k \tau_t^2)\big)+\varepsilon_t$, with $A\sim\mathcal{U}[0.6,1.4]$, $f_0\sim\mathcal{U}[0.5,1.0]$, $k\sim\mathcal{U}[2.0,5.0]$, $\tau_t$ an $80$-point grid on $[0,1]$, and $\varepsilon_t\sim\mathcal{N}(0,\,0.02^2)$. The time-varying frequency stresses extrapolation and off-manifold behavior.

\item \textbf{Damped sinusoidal time series} (length $L{=}40$): sequences decay smoothly in amplitude, following $x_t = A e^{-d t} \sin\big(2\pi f t + \phi\big) + \varepsilon_t$, with $A$, $f$, $d$, and $\phi$ drawn from uniform ranges. This benchmark evaluates how well uncertainty tracks vanishing signal energy and long-horizon extrapolation under structured decay.
\end{enumerate}

\begin{figure}[!t]
    \centering
    \begin{minipage}[t]{0.32\linewidth}
        \centering
        \includegraphics[width=\linewidth]{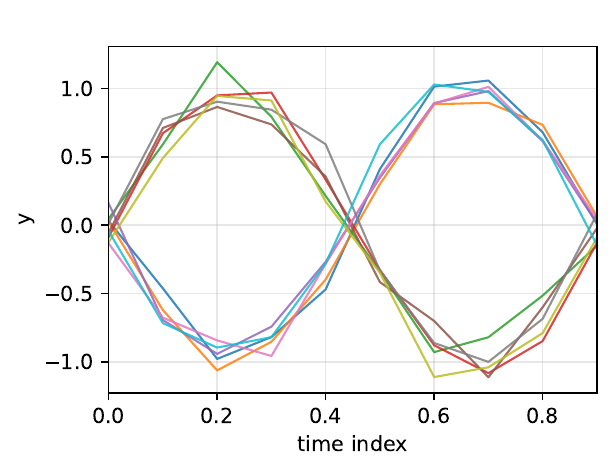}
        \caption*{\textbf{(a)} Bimodal Sinusoidal}
    \end{minipage}
    \hfill
    \begin{minipage}[t]{0.32\linewidth}
        \centering
        \includegraphics[width=\linewidth]{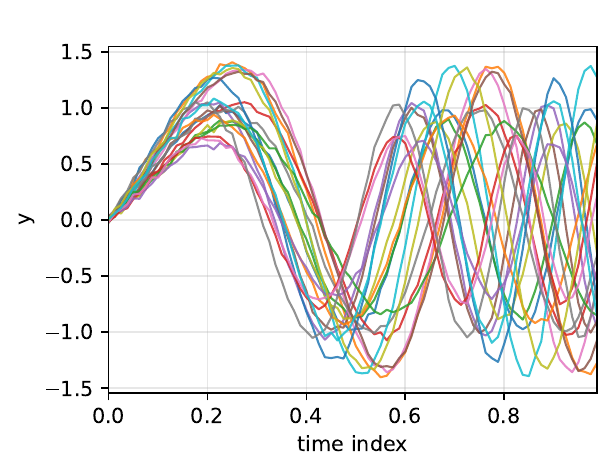}
        \caption*{\textbf{(b)} Chirp}
    \end{minipage}
    \hfill
    \begin{minipage}[t]{0.32\linewidth}
        \centering
        \includegraphics[width=\linewidth]{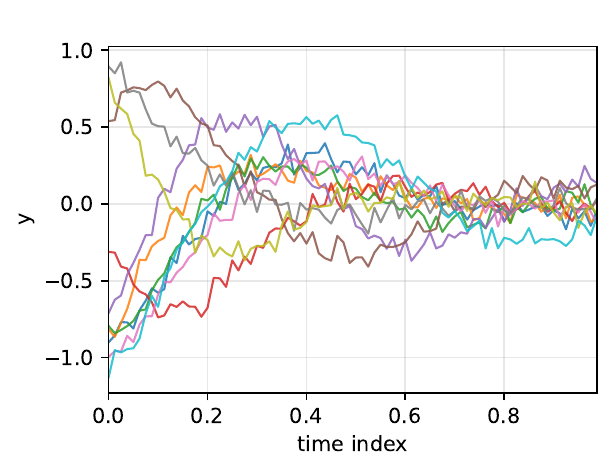}
        \caption*{\textbf{(c)} Damped Sinusoidal}
    \end{minipage}
    \caption{Illustrations of the synthetic datasets used for training. 
    Each dataset highlights a distinct modeling challenge: mode ambiguity, frequency drift, and amplitude drift.}
    \label{fig:datasets}
\end{figure}

\section{Model Details and Hyperparameters}\label{app:hyper}

For each dataset, we train a diffusion denoiser with output dimension matching the data. 
For the denoiser we use a ScoreNet-style architecture composed of FiLM–residual MLP blocks, 
trained with the $\varepsilon$-prediction objective used in diffusion models. 
The reported last-layer parameter counts correspond to the regression head 
$\texttt{Linear}(\text{hidden}\!\rightarrow\!\text{data\_dim})$, 
which maps the hidden representation to the data dimension and includes bias parameters.

On the bimodal sinusoidal dataset, the network has \textbf{8,330} total parameters; 
the LLLA head is \texttt{net.out} with weight $10{\times}32$ and bias 
(\textbf{330} last-layer parameters). 

On both the chirp and damped sine datasets, the model has \textbf{72,496} total parameters; 
the LLLA head is $80{\times}128$ plus bias (\textbf{10,320} last-layer parameters). 
For the full-parameter variant on the chirp dataset, we use the randomized version of the algorithm 
with a subsampled parameter set of \textbf{4,412} directions to control cost. 

On the 2D multimodal grid dataset, the network totals \textbf{7,810} parameters, 
with a \textbf{66}-parameter last layer. 
This setup lets us contrast (i) BayesDiff predictive variance, 
(ii) last-layer epistemic rollouts, and 
(iii) subsampled projected epistemic rollouts under matched training and sampling protocols.
\cref{tab:tuning} summarizes the key experimental settings for each dataset, including input dimensionality, model size, diffusion parameters, and optimization details used throughout our experiments. We report both last-layer Laplace (LLLA) and full-parameter Fisher/Laplace variants. 

\begin{table}[t]
\centering
\caption{Model architectures, dataset sizes, and parameter counts for all datasets.}
\resizebox{\textwidth}{!}{%
\begin{tabular}{lcccc}
\toprule
 & \textbf{2D Grid} & \textbf{Bimodal Sinusoid (L=10)} & \textbf{Chirp (L=80)} & \textbf{Damped Sine (L=40)} \\
\midrule
\textbf{Dataset size}  &6000/mode  &5000  &8000  &8000  \\
\textbf{Input dimension ($d$)}             & 2        & 10       & 80       & 40       \\
\textbf{Model width}                       & 32       & 32       & 128      & 128      \\
\textbf{Diffusion steps ($T$)}             & 800      & 600      & 600      & 600      \\
\textbf{Beta schedule}                     & cosine   & cosine   & cosine   & cosine   \\
\textbf{Learning rate}                     & $5\times10^{-6}$ & $5\times10^{-4}$ & $5\times10^{-4}$ & $5\times10^{-4}$ \\
\textbf{Optimizer} $(\beta_1,\beta_2)$     & (0.9, 0.999) & (0.9, 0.999) & (0.9, 0.999) & (0.9, 0.999) \\
\textbf{Batch size}                        & 256      & 512      & 256      & 256      \\
\textbf{Total parameters ($p$)}                  & 7{,}810  & 4{,}218  & 72{,}496 & 72{,}496 \\
\textbf{Last-layer parameters}             & 66       & 330      & 10{,}320 & 10{,}320 \\

\bottomrule
\end{tabular}
}
\vspace{3pt}
\label{tab:tuning}
\end{table}

\section{Implementation Details}

We reuse the same architectures and datasets across all runs. Training follows standard DDPM practice: \textbf{AdamW} optimizer with cosine learning–rate decay, gradient clipping, and an exponential moving average (EMA) of parameters. Unless stated otherwise, evaluation uses EMA weights; the EMA decay is tuned per dataset in the range $[0.999, 0.9999]$. We use diffusion loss with log–SNR weighting across timesteps (placing more mass around mid–SNR), and sample training timesteps from the same distribution used by the loss. Data preprocessing and augmentations (normalization, \emph{etc.}) match the baseline DDPM setup.

\textbf{Noise schedule and sampler}
We adopt a cosine (or linear, matching the baseline) $\beta$-schedule; the corresponding $(a_t, b_t, \tilde\sigma_t)$ are computed exactly from the schedule definitions.

\textbf{MAP estimate and posterior approximation.}
After training, we take the MAP point $\hat{\boldsymbol{\theta}}$ (the final EMA weights unless noted) and approximate the local parameter posterior with a Gaussian using the \texttt{laplace-torch} (Laplace) library. Curvature is computed as generalized Gauss–Newton / empirical Fisher on the training loss; damping (prior precision) is tuned on a small validation split. The posterior is computed once at $\hat{\boldsymbol{\theta}}$ and reused for all timesteps.

\begin{figure}[!t]
    \vspace{0.5cm}
    \centering
    \begin{overpic}[width=0.98\linewidth] 
    {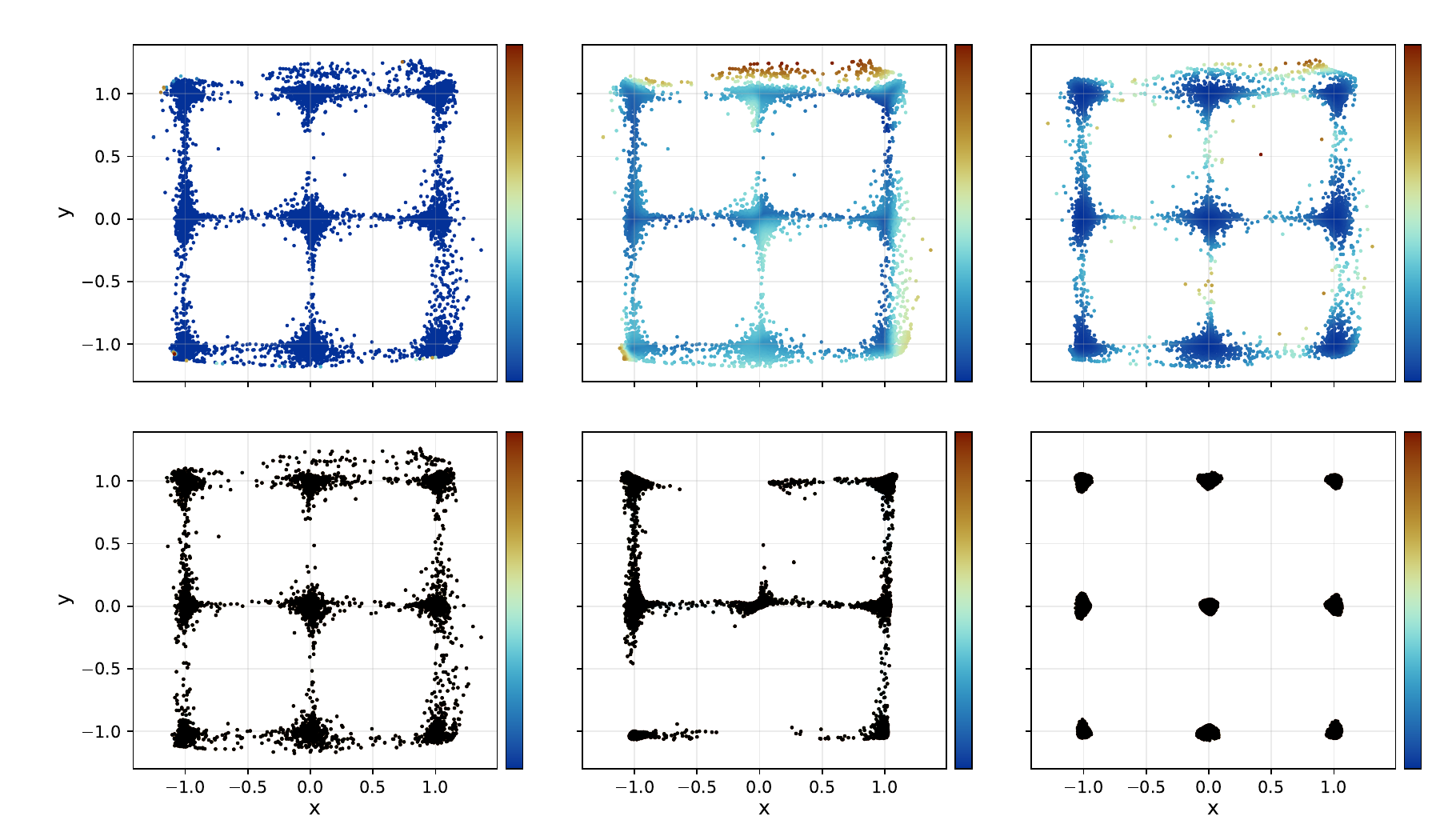}
            \put(35,0){\color{white}\rule{0.9em}{13em}} 
            \put(66,0){\color{white}\rule{0.9em}{13em}} 
            \put(97,0){\color{white}\rule{2em}{13em}} 
            \put(98,11){\rotatebox{90}{\scalebox{0.7}{filtered samples}}}
            \put(18,56){{\scalebox{0.85}{BayesDiff}}}
            \put(50,56){{\scalebox{0.85}{LLLA}}}
            \put(79,56){{\scalebox{0.85}{FLARE (ours)}}}
            \put(99,52){\rotatebox{90}{\scalebox{0.7}{certain}}}
            \put(99,31){\rotatebox{90}{\scalebox{0.7}{uncertain}}}
            
     \end{overpic}
    \caption{DDIM - Mode interpolation on a 2D Gaussian mixture adapted from~\citet{aithal2024understanding} and~\citet{jazbec2025generative}. The dataset consists of 9 Gaussian modes arranged on a square grid. The top row shows uncertainty scores from BayesDiff (left), LLLA (middle), and our method (right) for generated samples. The bottom row shows the same generated samples after filtering by a fixed uncertainty threshold. BayesDiff fails to assign high scores to points between modes, while LLLA performs even worse, leading to unreliable filtering. In contrast, our method produces faithful uncertainty estimates, enabling consistent removal of low-confidence samples.}
    \label{fig:grid_ddim}
\end{figure}

\begin{figure}[!t]
    \centering
    \includegraphics[width=0.98\linewidth]{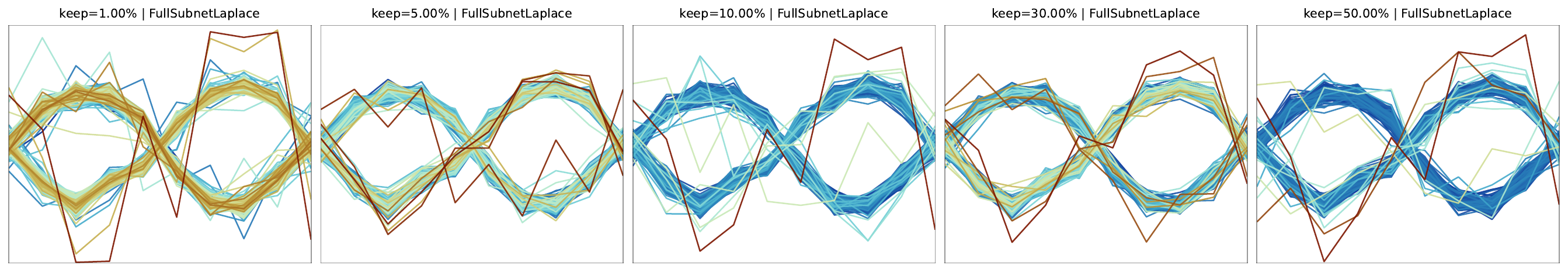}
    \includegraphics[width=0.98\linewidth]{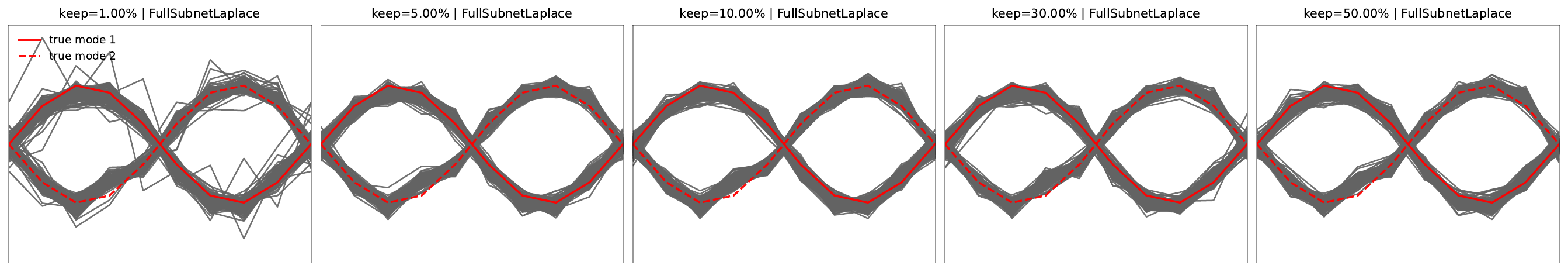}
    \caption{Parameter–budget ablation with \textsc{FullSubnetLaplace}. We retain
    $\{1\%, 5\%, 10\%, 30\%, 50\%\}$ of parameters. As the kept fraction increases,
    trajectories tighten around the data manifolds and the epistemic covariance
    $b_t^{2}\mathbf{J}_t\boldsymbol{\Sigma}_\theta\mathbf{J}_t^\top$ contracts smoothly;
    $10$–$30\%$ already yields stable mode coverage, with diminishing returns
    beyond $30$–$50\%$.}
    \label{fig:keep-ablation}
\end{figure}

\textbf{Jacobian handling.}
We never materialize $\mathbf{J}_t=\nabla_{\boldsymbol{\theta}}\varepsilon_{\boldsymbol{\theta}}(\mathbf{x}_t,t)$. All computations only require its action on vectors, obtained via autograd as vector--Jacobian and Jacobian--vector products (VJPs/JVPs), each costing a single forward$+$backward pass. We instantiate the posterior with the low-rank Laplace variant (via the \texttt{laplace} library), yielding $\boldsymbol{\Sigma}_\theta \approx \mathbf{U}\boldsymbol{\Lambda}\mathbf{U}^\top$ with rank $r\ll p$. The epistemic term is then formed without trace estimators:
\[
b_t^{2}\,\mathbf{J}_t \boldsymbol{\Sigma}_\theta \mathbf{J}_t^\top
~=~
b_t^{2}\sum_{i=1}^{r}\lambda_i\,(\mathbf{J}_t\mathbf{u}_i)(\mathbf{J}_t\mathbf{u}_i)^\top,
\]
computed by pushing the basis vectors $\{\mathbf{u}_i\}_{i=1}^r$ through $\mathbf{J}_t$ using $r$ JVPs per step $t$. We cache per-$t$ intermediates (e.g., $\varepsilon_{\hat{\boldsymbol{\theta}}}(\mathbf{x}_t,t)$ and $b_t$) to avoid redundant passes; the overall cost scales linearly with the chosen rank $r$ and is independent of the total parameter count $p$.


\section{Additional Experiments}

\paragraph{(a) DDIM results.}
We repeat our analysis with the deterministic DDIM sampler (setting the injected noise to zero, $\boldsymbol{\eta}_t \equiv \mathbf{0}$). In this case the aleatoric component vanishes and the one–step predictive covariance is dominated by the epistemic pushforward
\[
\mathrm{Cov}(\mathbf{x}_{t-1}\mid \mathbf{x}_t,t)
~\approx~
b_t^{2}\,\mathbf{J}_t\,\boldsymbol{\Sigma}_\theta\,\mathbf{J}_t^\top,
\]
composed with the DDIM step’s linear map when applicable. Qualitatively, the trajectories and uncertainty bands closely track the DDPM case: epistemic bands concentrate around the two ground–truth modes throughout the reverse sweep. \cref{fig:grid_ddim} shows results for 2D grid data. Again, it can be seen that FLARE faithfully filters the generated data.

\paragraph{(b) Parameter–budget ablation.}
We study the effect of restricting the parameter posterior to a fraction of weights using \textsc{FullSubnetLaplace}, keeping
$\{\!1\%, 5\%, 10\%, 30\%, 50\%\!\}$ of parameters.
As the kept fraction increases, samples tighten around the data manifolds and the epistemic covariance $b_t^{2}\mathbf{J}_t\boldsymbol{\Sigma}_\theta\mathbf{J}_t^\top$ contracts smoothly; already at $10$--$30\%$ we observe stable mode coverage, with diminishing returns beyond $30$--$50\%$. Representative trajectories across the grid are shown in \Cref{fig:keep-ablation}.

\begin{figure}[t]
\centering
\includegraphics[width=\linewidth]{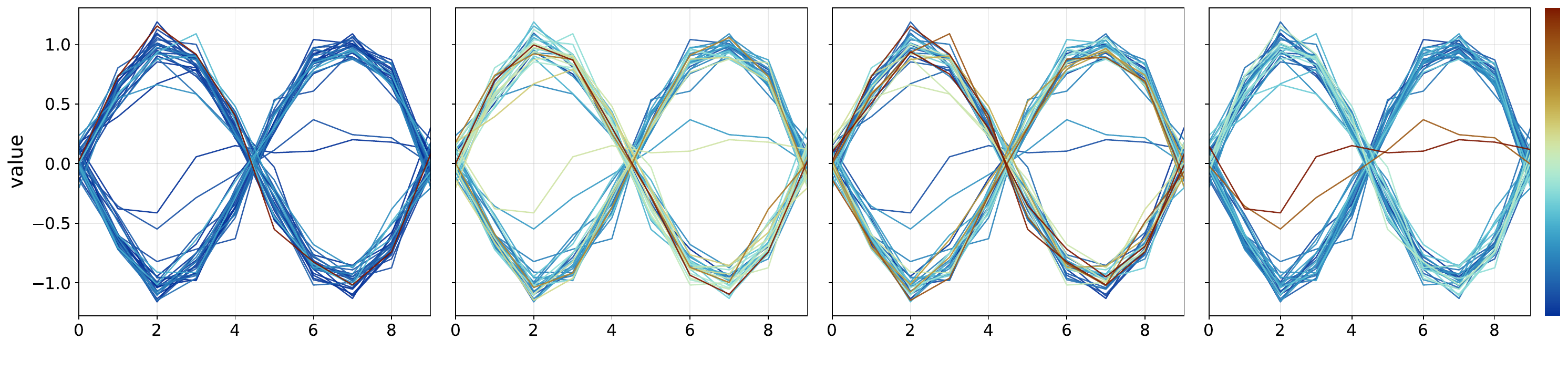}
\caption{\textbf{Four-way epistemic UQ comparison in the full-curvature regime.}
We compare uncertainty estimates from (left to right):
\emph{BayesDiff (last-layer Laplace, total recursion)},
\emph{Fuller-Hessian Laplace (Bayesdiff recursion)},
\emph{BayesDiff (epistemic-only recursion)},
and \emph{FLARE (Fisher--Laplace projection)}.
In this controlled setting, full-curvature computation is feasible and allows direct inspection
of epistemic structure.
The scalable approximation of~\cite{kou2023bayesdiff} (BayesDiff) already breaks down in this regime,
producing uncertainty maps that are not epistemically meaningful relative to the full-Hessian reference,
whereas FLARE closely tracks the curvature-based baseline while avoiding full-Hessian computation.}
\label{fig:fourway_uq}
\end{figure}

\section{Discussion around Parameter-Covariance Term}
\label{app:cross-term}

This appendix empirically evaluates the parameter cross-covariance term that arises in the full epistemic variance recursion and provides an independent Monte-Carlo (MC) validation of our analytic computation under a full-Hessian Laplace posterior.

\subsection{Theoretical Upper Bound on Cross-Covariance Term}
Let $\theta \sim \mathcal{N}(\hat\theta, \Sigma_\theta)$ denote the parameter posterior induced by a full-Hessian Laplace approximation. Linearizing the reverse diffusion step around $\hat\theta$ yields
\begin{equation}
x_{t-1} \;\approx\; a_t x_t - b_t \varepsilon_\theta(x_t,t),
\end{equation}
where $a_t,b_t$ are scalar schedule coefficients and $\varepsilon_\theta(\cdot)$ denotes the model’s noise predictor.

Conditioning on all non-parameter randomness (i.e.\ on a fixed reverse trajectory), the epistemic variance propagates as
\begin{equation}
\mathrm{Var}_\theta[x_{t-1}]
= a_t^2\,\mathrm{Var}_\theta[x_t]
+ b_t^2\,\mathrm{Var}_\theta[\varepsilon_t]
- 2 a_t b_t\, \mathrm{Cov}_\theta(x_t,\varepsilon_t).
\label{eq:epistemic-full}
\end{equation}
\paragraph{Bounding and absorbing the cross term.}
Condition on a fixed realization of aleatoric randomness $\eta$, so that all randomness is over
$\theta\sim\mathcal N(\hat\theta,\Sigma_\theta)$.
For any unit vector $u\in\mathbb S^{d-1}$, Cauchy--Schwarz yields
\begin{equation}
\Big|\mathrm{Cov}_\theta\!\big(u^\top x_t,\;u^\top \varepsilon_t\big)\Big|
\le
\sqrt{\mathrm{Var}_\theta(u^\top x_t)}\;\sqrt{\mathrm{Var}_\theta(u^\top \varepsilon_t)}.
\end{equation}
Applying Young's inequality $2\sqrt{ab}\le \delta a+\delta^{-1}b$ (for any $\delta>0$) gives
\begin{equation}
2\Big|\mathrm{Cov}_\theta\!\big(u^\top x_t,\;u^\top \varepsilon_t\big)\Big|
\le
\delta\,\mathrm{Var}_\theta(u^\top x_t)
+
\delta^{-1}\,\mathrm{Var}_\theta(u^\top \varepsilon_t).
\end{equation}
Taking the directional variance of \eqref{eq:epistemic-full} and bounding the cross term yields the sandwich bound
\begin{align}
\Big(a_t^2-|a_tb_t|\delta\Big)\mathrm{Var}_\theta(u^\top x_t)
+
\Big(b_t^2-|a_tb_t|\delta^{-1}\Big)\mathrm{Var}_\theta(u^\top \varepsilon_t)
\le
\mathrm{Var}_\theta(u^\top x_{t-1})
\\
\le
\Big(a_t^2+|a_tb_t|\delta\Big)\mathrm{Var}_\theta(u^\top x_t)
+
\Big(b_t^2+|a_tb_t|\delta^{-1}\Big)\mathrm{Var}_\theta(u^\top \varepsilon_t).
\nonumber
\end{align}
Thus the cross covariance term cannot introduce an uncontrolled contribution: it is always dominated by (and can be
absorbed into) the same two variance terms already present in the recursion.

\paragraph{When is the cross term small?}
Beyond being absorbable via Cauchy--Schwarz and Young's inequality, the cross term is
\emph{asymptotically negligible} under posterior concentration and local smoothness of the sampler.

Conditioned on a fixed realization of all aleatoric randomness, suppose the reverse trajectory
$x_t(\theta)$ admits a local linearization around the MAP,
\[
x_t(\theta)
=
x_t(\hat\theta)
+
G_t(\theta-\hat\theta)
+
r_t(\theta),
\]
where $G_t := \nabla_\theta x_t(\theta)\rvert_{\hat\theta}$ and the remainder satisfies
$\mathbb E_\theta \|r_t(\theta)\|^2 = o(\mathrm{tr}(\Sigma_\theta))$.
Then
\[
\mathrm{Cov}_\theta\!\bigl(x_t(\theta), J_t(\theta-\hat\theta)\bigr)
=
G_t \Sigma_\theta J_t^\top
+
o(\|\Sigma_\theta\|),
\]
and therefore
\[
\bigl\|\mathrm{Cov}_\theta(x_t(\theta), J_t(\theta-\hat\theta))\bigr\|
\;\le\;
\|G_t\|\,\|\Sigma_\theta\|\,\|J_t\|
+
o(\|\Sigma_\theta\|).
\]
Under posterior concentration ($\Sigma_\theta \to 0$), the cross term vanishes at the same order
as the leading epistemic variance term $J_t \Sigma_\theta J_t^\top$.

Importantly, $G_t$ is not an independent quantity: since the sampler depends on parameters only
through the denoiser, $G_t$ is itself a linear combination of past denoiser Jacobians
$\{J_s\}_{s>t}$ propagated through the linear reverse dynamics. Thus $\|G_t\|$ is controlled
by the same local smoothness and stability assumptions used to bound $\|J_t\|$.

A more direct sufficient condition is Lipschitz stability of the reverse trajectory in parameters:
assume that on the posterior mass,
\[
\|x_t(\theta)-x_t(\hat\theta)\| \le L_t \|\theta-\hat\theta\|.
\]
Then for any unit vector $u$,
\[
\mathrm{Var}_\theta(u^\top x_t)
\le
L_t^2\,\mathrm{tr}(\Sigma_\theta),
\qquad
\mathrm{Var}_\theta\!\bigl(u^\top J_t(\theta-\hat\theta)\bigr)
\le
\|J_t\|_2^2\,\lambda_{\max}(\Sigma_\theta),
\]
and by Cauchy--Schwarz,
\[
\bigl|\mathrm{Cov}_\theta(u^\top x_t, u^\top J_t(\theta-\hat\theta))\bigr|
\le
L_t \|J_t\|_2
\sqrt{\mathrm{tr}(\Sigma_\theta)\,\lambda_{\max}(\Sigma_\theta)}.
\]
Thus the cross term is small whenever the parameter posterior is concentrated and the reverse trajectory is stable to parameter perturbations.

\paragraph{Specialization to the linearized term.}
In our Monte-Carlo study we replace $\varepsilon_t$ by the linearized perturbation $J_t(\theta-\hat\theta)$.
In this case,
$\mathrm{Var}_\theta(u^\top J_t(\theta-\hat\theta)) = u^\top J_t\Sigma_\theta J_t^\top u$,
so the same absorption bound applies with
$\mathrm{Var}_\theta(u^\top \varepsilon_t)$ replaced by $u^\top J_t\Sigma_\theta J_t^\top u$.

Our main method (FLARE) drops the final cross term under a local decoupling assumption. Here we empirically verify that this term is numerically negligible.

\subsection{Empirical Monte Carlo Cross-Covariance Term Evaluation}

We empirically assess the ``missing'' cross-covariance term from the reviewer comment,
\begin{equation}
-2a_t b_t \, \mathrm{Cov}_{\theta}\!\bigl(x_t(\theta),\, J_t(\theta-\hat\theta)\bigr),
\label{eq:cross-term}
\end{equation}
where $\mathrm{Cov}_{\theta}(\cdot)$ denotes covariance over $\theta$ \emph{conditioned on a fixed realization of all aleatoric randomness} $\xi$
(initialization $x_T$ and diffusion noises $\{\eta_t\}$), as defined in our rebuttal.
We take $\hat\theta$ to be the MAP estimate and approximate the parameter posterior by a full-Hessian Laplace,
$\theta \sim \mathcal N(\hat\theta, \Sigma_\theta)$, with precision $P=\Sigma_\theta^{-1}$.

\paragraph{Shared stochastic reverse path.}
We fix a single realization $\xi$ and run stochastic DDPM sampling to obtain a trajectory $\{x_t(\hat\theta)\}_{t=0}^T$.
All Monte Carlo evaluations below reuse the same $\xi$, so that variability arises only from sampling $\theta$.

\paragraph{Local linearization term.}
At each step $t$, we compute the Jacobian at the MAP (evaluated along the shared path),
\begin{equation}
J_t \;=\; \nabla_\theta \varepsilon_\theta(x_t(\hat\theta), t)\big|_{\theta=\hat\theta},
\end{equation}
and use it to form the linearized perturbation $J_t(\theta-\hat\theta)$ for sampled $\theta$.

\paragraph{Monte Carlo estimator of the cross-covariance.}
We draw $S$ samples $\delta\theta^{(s)} \sim \mathcal N(0,\Sigma_\theta)$ and set $\theta^{(s)}=\hat\theta+\delta\theta^{(s)}$.
For each $s$, we recompute the DDPM reverse iterate $x_t(\theta^{(s)})$ using the \emph{same} aleatoric randomness $\xi$.
We then estimate the cross-covariance in~\eqref{eq:cross-term} dimensionwise:
\begin{equation}
\widehat{\mathrm{Cov}}_{\theta,j}(t)
=
\frac{1}{S-1}\sum_{s=1}^{S}
\Big(x_{t,j}(\theta^{(s)})-\bar x_{t,j}\Big)
\Big(\big(J_t\delta\theta^{(s)}\big)_j-\overline{(J_t\delta\theta)_j}\Big),
\label{eq:mc-cov}
\end{equation}
with $\bar x_{t,j}$ and $\overline{(J_t\delta\theta)_j}$ the sample means across $s$.

\paragraph{Epistemic recursion with and without the MC cross term.}
Using the same coefficients $(a_t,b_t)$ as in the main text, we compare:
\begin{align}
\mathrm{Var}^{\mathrm{noX}}_{t-1}
&= a_t^2\,\mathrm{Var}^{\mathrm{noX}}_{t}
  + b_t^2\,\mathrm{diag}\!\big(J_t \Sigma_\theta J_t^\top\big),
\\
\mathrm{Var}^{\mathrm{withX}}_{t-1}
&= a_t^2\,\mathrm{Var}^{\mathrm{withX}}_{t}
  + b_t^2\,\mathrm{diag}\!\big(J_t \Sigma_\theta J_t^\top\big)
  -2a_tb_t\,\widehat{\mathrm{Cov}}_{\theta}(t),
\end{align}
where $\widehat{\mathrm{Cov}}_{\theta}(t)\in\mathbb{R}^D$ stacks the dimensionwise estimates in~\eqref{eq:mc-cov}.
Per-sample epistemic scores are $u=\sum_{j=1}^{D}\mathrm{Var}_j$.

\paragraph{Result (baseline small model).}
On a baseline setting with $n=20$ shared-path samples and $S=128$ posterior draws per step,
the MC cross term changes the epistemic score by approximately $10^{-3}\%$--$10^{-2}\%$ on average.
A one-sided test against a practical threshold of $0.01\%$ rejects $H_0:\mathbb{E}[\Delta u/u]\ge 0.01\%$
with $p \approx 8.3\times 10^{-13}$, indicating the cross term is negligible at this scale.

\subsection{Quantitative Impact of the Cross Term}

Across $n=20$ samples on a shared stochastic DDPM trajectory, we observe that the relative contribution of the cross term is extremely small.

Specifically, the mean percent change
\[
\frac{u_{\mathrm{withX}} - u_{\mathrm{noX}}}{u_{\mathrm{noX}}}
\]
lies in the range
\[
10^{-3}\% \;\text{to}\; 5\times 10^{-3}\%,
\]
with the maximum observed per-sample contribution below $3\times 10^{-2}\%$.

We further test whether the mean contribution exceeds a practically meaningful tolerance. For a threshold of $\epsilon = 0.01\%$, we conduct a one-sided $t$-test:
\[
H_0:\; \mathbb{E}[\Delta u/u_{\mathrm{noX}}] \ge \epsilon,
\qquad
H_1:\; \mathbb{E}[\Delta u/u_{\mathrm{noX}}] < \epsilon,
\]
and reject $H_0$ with $p \approx 8\times 10^{-13}$. This confirms that the cross term is statistically and practically negligible.

\begin{table}[ht]
\centering
\caption{Effect size of the Monte Carlo cross-covariance term.
Statistics are computed over $N=80$ stochastic DDPM trajectories.
All values are reported in percent units.}
\label{tab:cross_effect}
\begin{tabular}{lccccc}
\toprule
Dataset & Mean (\%) & Std (\%) & Max (\%) & SE (\%) & df \\
\midrule
Sine  & 0.00338 & 0.00440 & 0.02220 & 0.00049 & 79 \\
Chirp & 0.00495 & 0.00610 & 0.02840 & 0.00068 & 79 \\
\bottomrule
\end{tabular}
\end{table}

\begin{table}[ht]
\centering
\caption{One-sided t-tests assessing whether the mean cross-covariance
contribution $\mu$ is below a user-chosen practical threshold $\tau$.
Negative t-statistics indicate evidence that $\mu < \tau$.}
\label{tab:cross_threshold}
\begin{tabular}{ccccccc}
\toprule
$\tau$ Threshold & Sine $t$ & Sine $p$ & Conclusion &
Chirp $t$ & Chirp $p$ & Conclusion \\
\midrule
0.005\% &
$-3.29$ & $0.9993$ & $\mu < 0.005\%$ &
$-0.10$ & $0.539$  & $\mu < 0.005\%$ \\

0.01\%  &
$-13.45$ & $1.0000$ & $\mu < 0.01\%$ &
$-7.43$  & $1.000$  & $\mu < 0.01\%$ \\
\bottomrule
\end{tabular}
\end{table}

\subsection{Implication for FLARE}

These experiments demonstrate that even under a full-Hessian Laplace posterior and explicit Jacobian evaluation, the parameter cross-covariance term contributes less than $10^{-2}\%$ to the epistemic uncertainty. Dropping this term therefore has negligible numerical impact while substantially simplifying the recursion and reducing computational cost, justifying the epistemic propagation used by FLARE.

%% file: references.bib
@article{meyer23,
  author       = {Raphael A. Meyer},
  title      = {Subspace Embedding via Leverage Score Sampling},
  year = 2023,
  journal = {RandNLA Proof Wiki},
  url          = {https://randnla.github.io/leverage-subspace-embedding/},
}

@article{Drineas17,
  title         = {Lectures on randomized numerical linear algebra},
  author={Drineas, Petros and Mahoney, Michael W},
  journal={American Mathematical Society},
  volume={3},
  number={4},
  year={2018},
  url={https://arxiv.org/pdf/1712.08880}
}

@article{murray2023randomized,
  title={Randomized numerical linear algebra: A perspective on the field with an eye to software},
  author={Murray, Riley and Demmel, James and Mahoney, Michael W and Erichson, N Benjamin and Melnichenko, Maksim and Malik, Osman Asif and Grigori, Laura and Luszczek, Piotr and Derezi{\'n}ski, Micha{\l} and Lopes, Miles E and others},
  journal={arXiv preprint arXiv:2302.11474},
  year={2023}
}

@article{erichson2019randomized,
  title={Randomized matrix decompositions using R},
  author={Erichson, N Benjamin and Voronin, Sergey and Brunton, Steven L and Kutz, J Nathan},
  journal={Journal of Statistical Software},
  volume={89},
  pages={1--48},
  year={2019}
}

@inproceedings{esser2024scaling,
  title={Scaling rectified flow transformers for high-resolution image synthesis},
  author={Esser, Patrick and Kulal, Sumith and Blattmann, Andreas and Entezari, Rahim and M{\"u}ller, Jonas and Saini, Harry and Levi, Yam and Lorenz, Dominik and Sauer, Axel and Boesel, Frederic and others},
  booktitle={Forty-first international conference on machine learning},
  year={2024}
}

@article{lu2022dpm,
  title={Dpm-solver: A fast ode solver for diffusion probabilistic model sampling in around 10 steps},
  author={Lu, Cheng and Zhou, Yuhao and Bao, Fan and Chen, Jianfei and Li, Chongxuan and Zhu, Jun},
  journal={Advances in neural information processing systems},
  volume={35},
  pages={5775--5787},
  year={2022}
}

@inproceedings{
liu2022flow,
title={Flow Straight and Fast: Learning to Generate and Transfer Data with Rectified Flow},
author={Xingchao Liu and Chengyue Gong and qiang liu},
booktitle={The Eleventh International Conference on Learning Representations },
year={2023},
url={https://openreview.net/forum?id=XVjTT1nw5z}
}

@inproceedings{
lipman2022flow,
title={Flow Matching for Generative Modeling},
author={Yaron Lipman and Ricky T. Q. Chen and Heli Ben-Hamu and Maximilian Nickel and Matthew Le},
booktitle={The Eleventh International Conference on Learning Representations },
year={2023},
url={https://openreview.net/forum?id=PqvMRDCJT9t}
}

@article{shu2024zero,
  title={Zero-shot uncertainty quantification using diffusion probabilistic models},
  author={Shu, Dule and Farimani, Amir Barati},
  journal={arXiv preprint arXiv:2408.04718},
  year={2024}
}

@article{daxberger2021laplace,
  title={Laplace redux-effortless bayesian deep learning},
  author={Daxberger, Erik and Kristiadi, Agustinus and Immer, Alexander and Eschenhagen, Runa and Bauer, Matthias and Hennig, Philipp},
  journal={Advances in neural information processing systems},
  volume={34},
  pages={20089--20103},
  year={2021}
}

@inproceedings{
kou2023bayesdiff,
title={BayesDiff: Estimating Pixel-wise Uncertainty in Diffusion via Bayesian Inference},
author={Siqi Kou and Lei Gan and Dequan Wang and Chongxuan Li and Zhijie Deng},
booktitle={The Twelfth International Conference on Learning Representations},
year={2024},
url={https://openreview.net/forum?id=YcM6ofShwY}
}

@inproceedings{de2025diffusion,
  title={Diffusion model guided sampling with pixel-wise aleatoric uncertainty estimation},
  author={De Vita, Michele and Belagiannis, Vasileios},
  booktitle={2025 IEEE/CVF Winter Conference on Applications of Computer Vision (WACV)},
  pages={3844--3854},
  year={2025},
  organization={IEEE}
}

@inproceedings{jazbec2025generative,
  title     = {Generative Uncertainty in Diffusion Models},
  author    = {Jazbec, Metod and Wong-Toi, Eliot and Xia, Guoxuan and Zhang, Dan and Nalisnick, Eric and Mandt, Stephan},
  booktitle = {Proceedings of the 41st Conference on Uncertainty in Artificial Intelligence (UAI)},
  year      = {2025},
  series    = {Proceedings of Machine Learning Research}
}

@inproceedings{
  berry2024casting,
  title={Shedding light on large generative networks: Estimating epistemic uncertainty in diffusion models},
  author={Berry, Lucas and Brando, Axel and Meger, David},
  booktitle={The 40th Conference on Uncertainty in Artificial Intelligence},
  year={2024}
}

@article{chan2024estimating,
  title={Estimating epistemic and aleatoric uncertainty with a single model},
  author={Chan, Matthew and Molina, Maria and Metzler, Chris},
  journal={Advances in Neural Information Processing Systems},
  volume={37},
  pages={109845--109870},
  year={2024}
}

@article{aithal2024understanding,
  title={Understanding hallucinations in diffusion models through mode interpolation},
  author={Aithal, Sumukh K and Maini, Pratyush and Lipton, Zachary and Kolter, J Zico},
  journal={Advances in Neural Information Processing Systems},
  volume={37},
  pages={134614--134644},
  year={2024}
}

@inproceedings{ho2020ddpm,
  title     = {Denoising Diffusion Probabilistic Models},
  author    = {Ho, Jonathan and Jain, Ajay and Abbeel, Pieter},
  booktitle = {Advances in Neural Information Processing Systems},
  volume    = {33},
  pages     = {6840--6851},
  year      = {2020}
}

@inproceedings{kendall2017uncertainties,
  title     = {What Uncertainties Do We Need in Bayesian Deep Learning for Computer Vision?},
  author    = {Kendall, Alex and Gal, Yarin},
  booktitle = {Advances in Neural Information Processing Systems},
  volume    = {30},
  year      = {2017}
}

@inproceedings{depeweg2018decomposition,
  title     = {Decomposition of Uncertainty in Bayesian Deep Learning for Efficient and Risk-Sensitive Learning},
  author    = {Depeweg, Stefan and Hern{\'a}ndez-Lobato, Jos{\'e} Miguel and Doshi-Velez, Finale and Udluft, Stephanie},
  booktitle = {International Conference on Machine Learning},
  pages     = {1184--1193},
  year      = {2018},
  organization = {PMLR}
}

@inproceedings{
song2020score,
title={Score-Based Generative Modeling through Stochastic Differential Equations},
author={Yang Song and Jascha Sohl-Dickstein and Diederik P Kingma and Abhishek Kumar and Stefano Ermon and Ben Poole},
booktitle={International Conference on Learning Representations},
year={2021},
url={https://openreview.net/forum?id=PxTIG12RRHS}
}

@article{hullermeier2021aleatoric,
  title   = {Aleatoric and Epistemic Uncertainty in Machine Learning: An Introduction to Concepts and Methods},
  author  = {H{\"u}llermeier, Eyke and Waegeman, Willem},
  journal = {Machine Learning},
  volume  = {110},
  pages   = {457--506},
  year    = {2021}
}

@inproceedings{ritter2018scalable,
  title     = {A Scalable Laplace Approximation for Neural Networks},
  author    = {Ritter, Hippolyt and Botev, Aleksandar and Barber, David},
  booktitle = {International Conference on Learning Representations},
  year      = {2018}
}

@inproceedings{lakshminarayanan2017deep,
  title     = {Simple and Scalable Predictive Uncertainty Estimation using Deep Ensembles},
  author    = {Lakshminarayanan, Balaji and Pritzel, Alexander and Blundell, Charles},
  booktitle = {Advances in Neural Information Processing Systems},
  volume    = {30},
  year      = {2017}
}

@inproceedings{gal2016dropout,
  title        = {Dropout as a Bayesian Approximation: Representing Model Uncertainty in Deep Learning},
  author       = {Gal, Yarin and Ghahramani, Zoubin},
  booktitle    = {International Conference on Machine Learning},
  pages        = {1050--1059},
  year         = {2016},
  organization = {PMLR}
}

@article{mackay1992practical,
  title   = {A Practical Bayesian Framework for Backpropagation Networks},
  author  = {MacKay, David J. C.},
  journal = {Neural Computation},
  volume  = {4},
  number  = {3},
  pages   = {448--472},
  year    = {1992}
}

@article{woodruff2014sketching,
  title     = {Sketching as a Tool for Numerical Linear Algebra},
  author    = {Woodruff, David P.},
  journal   = {Foundations and Trends in Theoretical Computer Science},
  volume    = {10},
  number    = {1--2},
  pages     = {1--157},
  year      = {2014},
  publisher = {Now Publishers Inc.}
}

@article{maddox2019simple,
  title={A simple baseline for bayesian uncertainty in deep learning},
  author={Maddox, Wesley J and Izmailov, Pavel and Garipov, Timur and Vetrov, Dmitry P and Wilson, Andrew Gordon},
  journal={Advances in neural information processing systems},
  volume={32},
  year={2019}
}

@article{krueger2017bayesian,
  title={Bayesian hypernetworks},
  author={Krueger, David and Huang, Chin-Wei and Islam, Riashat and Turner, Ryan and Lacoste, Alexandre and Courville, Aaron},
  journal={arXiv preprint arXiv:1710.04759},
  year={2017}
}

@article{hirschfeld2020uncertainty,
  title={Uncertainty quantification using neural networks for molecular property prediction},
  author={Hirschfeld, Lior and Swanson, Kyle and Yang, Kevin and Barzilay, Regina and Coley, Connor W},
  journal={Journal of Chemical Information and Modeling},
  volume={60},
  number={8},
  pages={3770--3780},
  year={2020},
  publisher={ACS Publications}
}

@article{oberdiek2022uqgan,
  title={Uqgan: A unified model for uncertainty quantification of deep classifiers trained via conditional gans},
  author={Oberdiek, Philipp and Fink, Gernot and Rottmann, Matthias},
  journal={Advances in Neural Information Processing Systems},
  volume={35},
  pages={21371--21385},
  year={2022}
}

@article{papamakarios2021normalizing,
  title={Normalizing flows for probabilistic modeling and inference},
  author={Papamakarios, George and Nalisnick, Eric and Rezende, Danilo Jimenez and Mohamed, Shakir and Lakshminarayanan, Balaji},
  journal={Journal of Machine Learning Research},
  volume={22},
  number={57},
  pages={1--64},
  year={2021}
}

@article{salimans2016improved,
  title={Improved techniques for training gans},
  author={Salimans, Tim and Goodfellow, Ian and Zaremba, Wojciech and Cheung, Vicki and Radford, Alec and Chen, Xi},
  journal={Advances in neural information processing systems},
  volume={29},
  year={2016}
}

@article{bohm2019uncertainty,
  title={Uncertainty quantification with generative models},
  author={B{\"o}hm, Vanessa and Lanusse, Fran{\c{c}}ois and Seljak, Uro{\v{s}}},
  journal={arXiv preprint arXiv:1910.10046},
  year={2019}
}

@inproceedings{nichol2021improved,
  title={Improved Denoising Diffusion Probabilistic Models},
  author={Nichol, Alex and Dhariwal, Prafulla},
  booktitle={International Conference on Machine Learning},
  year={2021}
}

@article{chewi2025ddpm,
  title={DDPM Score Matching and Distribution Learning},
  author={Chewi, Sinho and Kalavasis, Alkis and Mehrotra, Anay and Montasser, Omar},
  journal={arXiv preprint arXiv:2504.05161},
  year={2025}
}

@article{chen2022sampling,
  title={Sampling is as easy as learning the score: theory for diffusion models with minimal data assumptions},
  author={Chen, Sitan and Chewi, Sinho and Li, Jerry and Li, Yuanzhi and Salim, Adil and Zhang, Anru R},
  journal={arXiv preprint arXiv:2209.11215},
  year={2022}
}

@article{song2020denoising,
  title={Denoising diffusion implicit models},
  author={Song, Jiaming and Meng, Chenlin and Ermon, Stefano},
  journal={arXiv preprint arXiv:2010.02502},
  year={2020}
}

@article{lipman2024flow,
  title={Flow matching guide and code},
  author={Lipman, Yaron and Havasi, Marton and Holderrieth, Peter and Shaul, Neta and Le, Matt and Karrer, Brian and Chen, Ricky TQ and Lopez-Paz, David and Ben-Hamu, Heli and Gat, Itai},
  journal={arXiv preprint arXiv:2412.06264},
  year={2024}
}

@inproceedings{hang2023efficient,
  title={Efficient diffusion training via min-snr weighting strategy},
  author={Hang, Tiankai and Gu, Shuyang and Li, Chen and Bao, Jianmin and Chen, Dong and Hu, Han and Geng, Xin and Guo, Baining},
  booktitle={Proceedings of the IEEE/CVF international conference on computer vision},
  pages={7441--7451},
  year={2023}
}
